\documentclass[11pt]{article}

\usepackage[margin=1in]{geometry}

\usepackage[T1]{fontenc}
\usepackage{lmodern}
\usepackage{microtype}

\usepackage{amsmath, amssymb, amsfonts, amsthm}

\usepackage{booktabs}
\usepackage{array}
\usepackage{multirow}
\usepackage{makecell}

\usepackage{graphicx}
\graphicspath{{figures/}}

\usepackage[shortlabels]{enumitem}
\usepackage{xcolor}
\definecolor{darkblue}{rgb}{0, 0, 0.5}

\usepackage[numbers,sort&compress]{natbib}

\usepackage[breaklinks=true]{hyperref}
\hypersetup{colorlinks=true, citecolor=darkblue, linkcolor=darkblue, urlcolor=darkblue}

\newtheorem{theorem}{Theorem}
\newtheorem{proposition}{Proposition}

\newtheorem{corollary}{Corollary}
\theoremstyle{definition}
\newtheorem*{definition}{Definition}
\theoremstyle{remark}
\newtheorem*{remark}{Remark}

\newcommand{\TPR}{\mathrm{TPR}}
\newcommand{\NP}{\mathrm{NP}}
\newcommand{\TV}{\mathrm{TV}}
\newcommand{\calD}{\mathcal{D}}
\newcommand{\calG}{\mathcal{G}}
\newcommand{\calP}{\mathcal{P}}
\newcommand{\R}{\mathbb{R}}

\title{Information-Theoretic Limits of Safety Verification\\for Self-Improving Systems}
\author{Arsenios Scrivens}
\date{March 2026}

\begin{document}
\maketitle

\begin{abstract}
Can a safety gate permit unbounded beneficial self-modification while maintaining bounded cumulative risk? We formalize this question through \emph{dual conditions} --- requiring $\sum \delta_n < \infty$ (bounded risk) and $\sum \TPR_n = \infty$ (unbounded utility) --- and establish a theory of their (in)compatibility.

\textbf{Classification impossibility} (Theorem~\ref{thm:holder}): For power-law risk schedules $\delta_n = O(n^{-p})$ with $p > 1$ --- the practically relevant regime --- any classifier-based gate under overlapping safe/unsafe distributions satisfies $\TPR_n \leq C_\alpha \cdot \delta_n^\beta$ via H\"{o}lder's inequality, forcing $\sum \TPR_n < \infty$. This impossibility is \emph{exponent-optimal}: no valid impossibility bound can use a larger exponent than~$\beta^*$ (Theorem~\ref{thm:exponent}; full Mills' ratio asymptotics in Appendix~\ref{app:mills}). A second independent proof via the \emph{NP counting method} (Theorem~\ref{thm:counting}) yields a 13\% tighter bound without H\"{o}lder's inequality. \textbf{Scope caveat:} for slowly-decaying summable sequences such as $\delta_n = 1/(n\ln^2 n)$ that are not power-law, the H\"{o}lder summability argument fails and the NP-optimal classifier may satisfy both dual conditions (\S\ref{sec:thm1}). Theorem~\ref{thm:ceiling} covers all schedules.

\textbf{Universal finite-horizon ceiling} (Theorem~\ref{thm:ceiling}): For \emph{any} summable risk schedule, the exact maximum achievable classifier utility is $U^*(N, B) = N \cdot \TPR_{\NP}(B/N)$, growing as $\exp(O(\sqrt{\log N}))$ --- subpolynomial. At $N = 10^6$ with budget $B = 1.0$, a classifier extracts at most $U^* \approx 87$ versus a verifier's $\approx 500{,}000$. This result is universal: it applies even where Theorem~\ref{thm:holder} does not.

\textbf{Verification escape} (Theorem~\ref{thm:escape}): A Lipschitz ball verifier achieves $\delta = 0$ with $\TPR > 0$, escaping the impossibility. Formal Lipschitz bounds for pre-LayerNorm transformers under LoRA (Proposition~\ref{prop:transformer}) enable LLM-scale verification. The separation is strict: classifiers cannot achieve $\delta = 0$ with $\TPR > 0$ under distribution overlap (Proposition~\ref{prop:separation}). We validate on GPT-2 ($d_{\text{LoRA}} = 147{,}456$): conditional $\delta = 0$ with $\TPR = 0.352$; unconditional $\delta = 0$ at $d \leq 17{,}408$. Comprehensive empirical validation is in the companion paper~\cite{D2}.
\end{abstract}

\section{Introduction}
\label{sec:intro}

The prospect of AI systems that improve their own capabilities raises a central safety question: can we build a gate that permits beneficial self-modifications while reliably blocking dangerous ones? We formalize this as a sequential decision problem and provide a comprehensive theoretical characterization of when safe self-improvement is possible and when it is provably impossible.

At each step~$n$, a system proposes a mutation~$\theta_n$ to its own parameters. A \emph{safety gate}~$g_n$ decides whether to accept or reject~$\theta_n$, incurring false acceptance rate $\delta_n = P^-(g_n \text{ accepts})$ and true positive rate $\TPR_n = P^+(g_n \text{ accepts})$. We require $\sum \delta_n < \infty$ (bounded risk, ensuring positive probability of never accepting an unsafe mutation) and $\sum \TPR_n = \infty$ (unbounded utility, ensuring the system accepts infinitely many improvements). We call these the \textbf{dual conditions} for safe self-improvement.

We prove four impossibility and tightness results and four constructive results, forming a complete picture. The impossibility is strongest for power-law risk schedules $\delta_n = O(n^{-p})$ with $p > 1$ (Theorem~\ref{thm:holder}), which subsume all practically relevant risk budgets. For the most favorable slowly-decaying schedules, the asymptotic impossibility does not apply, but the tight finite-horizon ceiling (Theorem~\ref{thm:ceiling}) ensures classifier utility remains subpolynomial --- orders of magnitude below a verifier's linear growth --- at any practical deployment horizon.

\textbf{Why the dual conditions matter for deployed systems.} The dual conditions formalize a failure pattern that self-improving systems face by construction. Red-team evaluations --- which explicitly classify model outputs or parameter mutations as safe/unsafe --- are the closest practical analog to the binary gates our theorems address. Failure to satisfy both conditions simultaneously means either that unsafe modifications accumulate ($\sum\delta_n$ diverges) or that the system stops improving ($\TPR_n \to 0$). Other deployed mechanisms such as RLHF reward models can be viewed as a suggestive analogy, but our formal results apply specifically to binary gates on parameter mutations; for a fuller discussion see~\S\ref{sec:implications}.

\subsection{Contributions}
\label{sec:contributions}

The paper establishes two core results and then systematically corroborates them. (Theorems are numbered 1--5 and Propositions 1--4, with separate counters.)

\textbf{A note on the nature of the contribution.} The per-step bound $\TPR_n \leq C_\alpha \cdot \delta_n^\beta$ is a standard f-divergence inequality~\cite{erven2014}, and the sequential summability consequence follows in a few lines. The proof is short --- deliberately so. The contribution of this paper is not the length or technical difficulty of any single proof, but rather: (i)~the \emph{problem formalization} --- casting safe self-improvement as dual summability conditions, which has no precedent in the hypothesis testing or AI safety literatures; (ii)~the \emph{structural consequence} --- that this elementary coupling creates an impossibility for the safety--utility pairing with no analog in single-test settings; (iii)~the \emph{tight finite-horizon ceiling} (Theorem~\ref{thm:ceiling}), which provides the exact, universal utility bound for any classifier under any risk schedule; and (iv)~the \emph{constructive escape} via verification, proving the impossibility is specific to classification, not to safe self-improvement itself.

\textbf{Core results:}
\begin{enumerate}[leftmargin=*]
  \item \textbf{Classification impossibility} (Theorem~\ref{thm:holder}): Any classifier-based gate under distribution overlap satisfies $\TPR_n \leq C_\alpha \cdot \delta_n^\beta$, forcing bounded utility whenever risk follows a power-law schedule $\delta_n = O(n^{-p})$ with $p > 1$. For slowly-decaying summable sequences (e.g., $\delta_n = 1/(n \ln^2 n)$), the per-step bound still holds but the H\"{o}lder summability argument does not force $\sum \TPR_n < \infty$; in such edge cases, the NP-optimal classifier can in principle satisfy both dual conditions simultaneously. However, the finite-horizon ceiling (Theorem~\ref{thm:ceiling}) remains fully operative in all cases, ensuring classifier utility grows at most subpolynomially --- far below a verifier's linear growth at any practical horizon.

  \item \textbf{Verification escape} (Theorem~\ref{thm:escape}): Sound verification gates achieve $\delta = 0$ with $\TPR > 0$, escaping the impossibility. The Lipschitz ball verifier is the simplest example; the structural separation (Proposition~\ref{prop:separation}) proves the gap is architectural, not a matter of degree. See Figure~\ref{fig:overview} for an overview of the two gate architectures; Figure~\ref{fig:separation} visualizes the structural separation in the $(\delta, \TPR)$ plane.

  \item \textbf{Tight finite-horizon ceiling} (Theorem~\ref{thm:ceiling}): Perhaps the most practically consequential result. For \emph{any} summable risk schedule --- including non-power-law sequences where the H\"{o}lder summability argument does not apply --- the exact maximum achievable utility is $U^*(N, B) = N \cdot \TPR_{\NP}(B/N)$, growing as $\exp(O(\sqrt{\log N}))$. This is subpolynomial, $13\times$ tighter than the MI bound, and ensures that no classifier under any risk schedule can match a verifier's linear utility growth. Unlike Theorem~\ref{thm:holder}, this result is universal over all summable risk schedules and immediately operational at any finite deployment horizon. See Figure~\ref{fig:finite_horizon}.
\end{enumerate}

\textbf{Tightness and corroboration} (confirming the impossibility is robust, not an artifact of one proof technique):
\begin{enumerate}[resume,leftmargin=*]
  \setcounter{enumi}{3}
  \item \textbf{Exponent-optimality} (Theorem~\ref{thm:exponent}): The H\"{o}lder exponent~$\beta^*$ is minimax-optimal --- no valid impossibility bound can use a strictly larger exponent. At deployment-relevant~$\delta$, the NP classifier operates within one order of magnitude of the ceiling (Appendix~\ref{app:tightness_validation}). See Figure~\ref{fig:tightness}.

  \item \textbf{NP counting impossibility} (Theorem~\ref{thm:counting}): An independent proof via the Neyman--Pearson lemma and Tonelli's theorem, avoiding H\"{o}lder's inequality entirely. The counting bound is 13\% tighter than the H\"{o}lder bound at $\Delta_s = 1.0, p = 2.0$.
\end{enumerate}

\textbf{Supporting results} (extending the theory to information-theoretic, sample complexity, and LLM-scale settings):
\begin{enumerate}[resume,leftmargin=*]
  \setcounter{enumi}{5}
  \item \textbf{Information-theoretic bound} (Proposition~\ref{prop:info}): $\sum_{n=1}^N \TPR_n \leq \sum_{n=1}^N \delta_n + \sqrt{2NI_0}$. Complements the H\"{o}lder bound via mutual information.
  \item \textbf{Sample complexity barrier} (Proposition~\ref{prop:sample}): Requires $\Omega(n^{2p})$ labeled examples by step~$n$; under constant label generation, sample starvation occurs at finite~$n_{\text{fail}}$.
  \item \textbf{Formal transformer Lipschitz bounds} (Proposition~\ref{prop:transformer}): Closed-form Lipschitz constants for pre-LayerNorm transformers under LoRA, enabling LLM-scale verification.
  \item \textbf{Structural separation} (Proposition~\ref{prop:separation}): Under absolute continuity, $\delta = 0 \implies \TPR = 0$ for classifiers, but verifiers achieve $\delta = 0$ with $\TPR > 0$.
  \item \textbf{LLM-scale mechanism validation}: Ball verifier on GPT-2 (124M parameters) with LoRA rank-4 ($d_{\text{LoRA}} = 147{,}456$), achieving conditional $\delta = 0$ (conditional on estimated Lipschitz constants) with $\TPR = 0.352$ (\S\ref{sec:gpt2}); unconditional $\delta = 0$ at $d \leq 17{,}408$ via analytical bounds.
\end{enumerate}

Theorems~\ref{thm:exponent}--\ref{thm:ceiling} and Propositions~\ref{prop:info}--\ref{prop:separation} are corroborative, each confirming the impossibility from a different angle to establish robustness.

\subsection{Related Work}
\label{sec:related_work}

Our mathematical tools --- H\"{o}lder's inequality, R\'{e}nyi divergence, Lipschitz continuity, Neyman--Pearson testing --- are well-established. The per-step bound $\TPR \leq C_\alpha \cdot \delta^\beta$ is a standard f-divergence inequality~\cite{erven2014}, and NP optimality~\cite{neyman1933} establishes single-test ROC tradeoffs. Our contribution is the \emph{problem formalization} (the dual conditions as a formal specification of safe self-improvement) and the \emph{structural result} that sequential composition under dual summability conditions creates an impossibility with no analog in single-test settings. The per-step bound and the summability requirements are individually standard; the coupling --- that bounded $\sum\delta_n$ forces bounded $\sum\TPR_n$ for power-law risk schedules --- is not, and is confirmed by two independent impossibility proofs (Theorems~\ref{thm:holder},~\ref{thm:counting}) corroborated by three complementary bounds (Theorem~\ref{thm:ceiling}, Propositions~\ref{prop:info}--\ref{prop:sample}) approaching the same conclusion from different angles (\S\ref{sec:impossibility}, \S\ref{sec:finite_horizon}, Appendix~\ref{app:supporting}). We build on alignment theory~\cite{bostrom2014,amodei2016,christiano2017}, hypothesis testing~\cite{lehmann2005}, impossibility results~\cite{wolpert1996,rice1953}, information-theoretic bounds~\cite{raginsky2016,polyanskiy2017}, PAC-Bayes and VC theory~\cite{mcallester1999,vapnik1971}, adversarial robustness tradeoffs~\cite{tsipras2019,gilmer2018}, and transformer Lipschitz analysis~\cite{virmaux2018,kim2021,dasoulas2021}. Our dual conditions formalize the \emph{alignment tax} --- the cost of making models safe versus capable~\cite{askell2021,ouyang2022} --- as a precise mathematical tradeoff: the H\"{o}lder coupling $\TPR_n \leq C_\alpha \cdot \delta_n^\beta$ quantifies the exact rate at which safety constraints reduce utility under classification-based gates. Structurally, our result is closer to mechanism-design impossibilities --- Gibbard~\cite{gibbard1973} and Satterthwaite~\cite{satterthwaite1975} show that no voting rule can simultaneously satisfy multiple natural axioms, just as no classifier can simultaneously satisfy our dual conditions --- than to no-free-lunch theorems~\cite{wolpert1996}, which concern the absence of a universally optimal learner rather than a hard tradeoff in a fixed domain. Multi-objective optimization impossibilities~\cite{papadimitriou2000} also exhibit this flavour: two desiderata in conflict cannot be jointly optimized in polynomial time, analogously to how our two summability conditions cannot be jointly satisfied by a classifier under distribution overlap.

A detailed comparison with each line of work is in Appendix~\ref{app:related_extended}.

\textbf{An analogy clarifies the contribution.} Arrow's impossibility theorem composes elementary social-choice axioms --- transitivity, non-dictatorship, independence --- each individually obvious, yet their \emph{composition} yields a deep impossibility no voting system can escape. Similarly, our per-step bound is a standard f-divergence inequality and the dual conditions are individually natural, but the \emph{coupling} --- that summability of $\{\delta_n\}$ forces summability of $\{\TPR_n\}$ --- creates a structural impossibility with no analog in single-test hypothesis testing (see Appendix~\ref{app:relation_known} for a full discussion).

\textbf{Online learning and adaptive gates.} A natural question is whether an online learner with sublinear regret (e.g., online convex optimization; \citealt{shalev2012}) could adaptively satisfy the dual conditions. The answer is no under our framework: Theorem~\ref{thm:holder}'s per-step bound $\TPR_n \leq C_\alpha \cdot \delta_n^\beta$ constrains \emph{any} binary decision rule at each step, regardless of whether it was chosen adaptively based on previous observations. Online learning can reduce misclassification regret, but cannot escape the H\"{o}lder coupling between $\delta_n$ and $\TPR_n$ that drives the impossibility. The gate's \emph{adaptivity} affects which point on the per-step ROC curve it selects, not the curve itself.

\section{Problem Setup}
\label{sec:setup}

\textbf{Notation.} The following symbols recur throughout:

\begin{center}
\begin{tabular}{ll}
\toprule
Symbol & Meaning \\
\midrule
$\delta_n$ & False acceptance rate of gate at step $n$ \\
$\TPR_n$ & True positive rate of gate at step $n$ \\
$C_\alpha, \beta$ & H\"{o}lder constants: $C_\alpha = \exp(\beta \cdot D_\alpha)$, $\beta = (\alpha{-}1)/\alpha$ \\
$D_\alpha(P^+ \| P^-)$ & R\'{e}nyi divergence of order $\alpha$ \\
$\Delta_s$ & Distribution separation $\|\mu^+ - \mu^-\|/\sigma$ \\
$B$ & Total risk budget $\sum_{n=1}^N \delta_n$ \\
$U^*(N,B)$ & Exact finite-horizon utility ceiling (Theorem~\ref{thm:ceiling}) \\
$L$ & Lipschitz constant of the trajectory map \\
$m$ & Safety margin (min distance to obstacle boundary) \\
$r$ & Ball radius $= m/L$ \\
$\sigma^*$ & Optimal mutation scale for positive TPR \\
\bottomrule
\end{tabular}
\end{center}

\subsection{System Model}
\label{sec:system_model}

A self-improving system maintains parameters $\theta \in \R^d$ and iteratively proposes mutations $\theta_1, \theta_2, \ldots$ At each step~$n$, a safety oracle $S: \R^d \to \{\text{safe}, \text{unsafe}\}$ determines ground truth. The oracle is assumed expensive (requiring exhaustive simulation or formal proof).

A \textbf{safety gate} $g_n: \R^d \to \{\text{accept}, \text{reject}\}$ filters mutations before deployment. The gate is computationally cheaper than the oracle.

\subsection{Classification Gates}
\label{sec:classification_gates}

A \emph{classification gate} maps controller parameters to a low-dimensional feature representation $\varphi: \R^d \to \R^k$ and applies a binary classifier. The relevant distributions are:
\begin{align}
P_n^+ &= \text{distribution of } \varphi(\theta) \mid S(\theta) = \text{safe} \\
P_n^- &= \text{distribution of } \varphi(\theta) \mid S(\theta) = \text{unsafe}
\end{align}
At step $n$, the gate accepts if the classifier predicts ``safe'' based on $\varphi(\theta_n)$.

\textbf{Stationarity assumption.} The core theorems (\S\ref{sec:impossibility}--\S\ref{sec:finite_horizon}) assume stationary distributions $P^+, P^-$ (dropping the $n$ subscript). The non-stationary extension (Proposition~\ref{prop:nonstationary}, Appendix~\ref{app:nonstationary}) requires $\sup_n D_\alpha^{(n)} < \infty$ and, for power-law schedules $\delta_n = c/n^p$, the strictly stronger condition $p > \alpha$ (vs.\ $p > 1$ in the stationary case). This gap narrows as $\alpha \to 1^+$ and vanishes for fast-decaying schedules ($p \gg \alpha$), which cover all practically relevant risk budgets. For deployment arguments where stationarity may not hold and $p$ is moderate, the finite-horizon ceiling (Theorem~\ref{thm:ceiling}) provides a stationarity-free alternative: it bounds total classifier utility over any $N$-step horizon given a risk budget~$B$.

\textbf{Scope: continuous parameter spaces.} Our results assume $\theta \in \R^d$ with continuous mutation distributions, so that $P^+ \ll P^-$ (absolute continuity) holds via the transversality argument (\S\ref{sec:thm1}). For discrete or quantized parameter spaces --- such as quantized LoRA fine-tuning with integer-valued weights --- absolute continuity does not hold in the same form, and the impossibility may not apply directly. We note that even in quantized settings, the effective parameter updates are typically computed in full precision before rounding, and the induced distributions on the quantized grid can still exhibit the overlap structure that drives our results; a formal treatment of the discrete case is left to future work.

\subsection{Verification Gates}
\label{sec:verification_gates}

A \emph{verification gate} attempts to construct a mathematical proof that $\theta$ is safe. If the proof succeeds, the gate accepts; otherwise it rejects. A key property:

\begin{definition}[Soundness]
A verification gate is \emph{sound} if every accepted $\theta$ is actually safe: $g(\theta) = \text{accept} \implies S(\theta) = \text{safe}$.
\end{definition}

Soundness implies $\delta_n = 0$ for all $n$ --- by construction, not by learning.

\subsection{The Dual Conditions}
\label{sec:dual_conditions}

\begin{definition}
A safety gate achieves \emph{safe self-improvement} if:
\begin{enumerate}
  \item $\sum_{n=1}^{\infty} \delta_n < \infty$ \quad (bounded cumulative risk)
  \item $\sum_{n=1}^{\infty} \TPR_n = \infty$ \quad (unbounded cumulative utility)
\end{enumerate}
\end{definition}

Condition~1 ensures the system is almost surely safe over infinitely many steps ($\prod(1-\delta_n) > 0$ by convergence of the infinite product). Condition~2 prevents vacuous safety (a gate rejecting everything trivially satisfies Condition~1 but accomplishes nothing).

\textbf{On the choice of $\sum \TPR_n = \infty$.} This is the \emph{weakest possible} non-vacuity condition: it requires only that the system eventually accepts infinitely many improvements, with no constraint on the rate or timing. Any finite threshold ($N_0$ accepted modifications ``suffice'') is arbitrary and deployment-dependent --- a self-improving system has no natural stopping point, and any fixed $N_0$ can be exceeded by extending the deployment horizon. The condition is also \emph{necessary} in the following sense: if $\sum \TPR_n < \infty$, the expected number of accepted improvements is finite, meaning the system almost surely stops self-improving after finitely many steps --- it becomes a fixed system with a safety gate that rejects everything beyond some horizon.

Critically, readers who reject the asymptotic framing lose nothing from the theory. Theorem~\ref{thm:ceiling} provides the exact finite-horizon utility ceiling $U^*(N, B) = N \cdot \TPR_{\NP}(B/N)$ for \emph{any} finite $N$ and risk budget~$B$, without requiring $N \to \infty$. At $N = 10^6$ steps with $B = 1.0$, a classifier extracts at most $U^* \approx 87$ utility versus a verifier's $\approx 500{,}000$ --- a $5{,}700\times$ gap. The dual conditions framework is immediately operational at any finite horizon; the asymptotic condition simply states the limiting case.

\section{The Classification Impossibility}
\label{sec:impossibility}

\subsection{H\"{o}lder--R\'{e}nyi Bound (Theorem~1)}
\label{sec:thm1}

\begin{theorem}[Safety--Utility Impossibility]
\label{thm:holder}
Let $P^+, P^-$ be distributions on $\R^k$ with $P^+ \ll P^-$ (absolute continuity). Suppose $D_{\alpha_0}(P^+ \| P^-) < \infty$ for some $\alpha_0 > p/(p-1)$. Then for any sequence of binary classifiers with false acceptance rates $\delta_n \leq c/n^p$ for some $c > 0, p > 1$:
\[
\sum_{n=1}^{\infty} \delta_n < \infty \implies \sum_{n=1}^{\infty} \TPR_n < \infty
\]
That is, bounded cumulative risk under any power-law schedule forces bounded cumulative utility.
\end{theorem}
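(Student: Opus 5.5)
The plan is to prove a per-step coupling bound $\TPR_n \leq C_\alpha \cdot \delta_n^\beta$ with $\beta = (\alpha-1)/\alpha$, and then sum it against the power-law schedule.

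\textbf{Step 1: per-step H\"{o}lder bound.} Fix a step $n$ and a (possibly randomized) acceptance rule, written as a measurable function $f_n:\R^k\to[0,1]$. Then $\delta_n = \int f_n\,dP^-$ and $\TPR_n = \int f_n\,dP^+$. Since $P^+ \ll P^-$, we may write $\TPR_n = \int f_n \rho\, dP^-$ with $\rho = dP^+/dP^-$. Apply H\"{o}lder's inequality in $L^\alpha(P^-)$ and its conjugate $L^{\alpha'}(P^-)$, where $\alpha' = \alpha/(\alpha-1)$:
\[
\TPR_n \leq \Bigl(\int \rho^\alpha\,dP^-\Bigr)^{1/\alpha}\Bigl(\int f_n^{\alpha'}\,dP^-\Bigr)^{1/\alpha'} \leq \exp\bigl(\tfrac{\alpha-1}{\alpha}D_\alpha(P^+\|P^-)\bigr)\,\delta_n^{(\alpha-1)/\alpha}.
\]
The first factor uses $\int\rho^\alpha\,dP^- = \exp((\alpha-1)D_\alpha)$. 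The second uses $f_n^{\alpha'} \leq f_n$, which holds because $0\le f_n\le1$. This gives exactly $\TPR_n \le C_\alpha \delta_n^\beta$. The rule $f_n$ may be chosen adaptively from the history; this does not matter, since the bound holds for each fixed $f_n$. If the rule depends on the past, we apply the bound conditionally on the history and then take expectations, using Jensen's inequality through the concavity of $x\mapsto x^\beta$.

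\textbf{Step 2: choice of $\alpha$.} We need some $\alpha$ with $D_\alpha<\infty$ and $p\beta>1$. The condition $p(\alpha-1)/\alpha>1$ is equivalent to $\alpha>p/(p-1)$. So we take $\alpha=\alpha_0$ from the hypothesis. If one prefers an $\alpha$ closer to the threshold, monotonicity of $D_\alpha$ in $\alpha$ guarantees finiteness for every $\alpha\in(p/(p-1),\alpha_0]$.

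\textbf{Step 3: summation.} From Steps 1 and 2,
\[
\sum_n \TPR_n \le C_{\alpha_0} c^{\beta}\sum_n n^{-p\beta} < \infty,
\]
because $p\beta>1$ makes this a convergent $p$-series. The premise $\sum\delta_n<\infty$ is automatic under $\delta_n\le c/n^p$, so the implication holds.

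\textbf{Main obstacle.} The only delicate point is the exponent bookkeeping in Step 2. H\"{o}lder's inequality costs a power: it turns $n^{-p}$ into $n^{-p\beta}$, and $\beta<1$. This is exactly why the hypothesis demands $\alpha_0>p/(p-1)$ rather than mere summability of $\delta_n$. It is also why the argument cannot extend to schedules like $1/(n\ln^2 n)$: there the $\beta$-power of $\delta_n$ is no longer summable for any $\beta<1$.

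A secondary point is measure-theoretic. We must use $P^+\ll P^-$ so that the Radon--Nikodym derivative $\rho$ exists. We must also make sure that randomized or adaptive gates are covered, which is why Step 1 works with $f_n\in[0,1]$ rather than indicator functions.
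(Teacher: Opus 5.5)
Your proof is correct and is essentially the paper's argument: H\"{o}lder's inequality in $L^\alpha(P^-)$, applied to the likelihood ratio against the acceptance rule, gives $\TPR_n \le C_\alpha \delta_n^\beta$, and summing against $c^\beta n^{-p\beta}$ with $p\beta>1$ finishes. Your refinements are valid and make the statement slightly more general: you take $\alpha=\alpha_0$ directly, whereas the paper picks $\alpha\in(p/(p-1),\alpha_0)$ and so implicitly uses monotonicity of $D_\alpha$ in $\alpha$, and you allow randomized or history-dependent gates $f_n\in[0,1]$.
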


\begin{proof}
\textbf{Step~1.} Let $A_n = \{x: g_n(x) = \text{accept}\}$. Then:
\[
\TPR_n = \int_{A_n} dP^+ = \int_{A_n} \frac{dP^+}{dP^-} dP^-
\]
Apply H\"{o}lder's inequality with exponents $\alpha > 1$ and $\alpha' = \alpha/(\alpha-1)$:
\[
\TPR_n \leq \left(\int_{A_n} \left(\frac{dP^+}{dP^-}\right)^\alpha dP^-\right)^{1/\alpha} \cdot \left(\int_{A_n} dP^-\right)^{(\alpha-1)/\alpha}
\]

\textbf{Step~2.} Bound the first factor by extending the integration domain:
\[
\left(\int_{A_n} \left(\frac{dP^+}{dP^-}\right)^\alpha dP^-\right)^{1/\alpha} \leq \left(\int_{\R^k} \left(\frac{dP^+}{dP^-}\right)^\alpha dP^-\right)^{1/\alpha} = \exp\left(\frac{\alpha-1}{\alpha} D_\alpha(P^+ \| P^-)\right)
\]
using the definition $D_\alpha(P^+ \| P^-) = \frac{1}{\alpha-1}\log \int (dP^+/dP^-)^\alpha \, dP^-$.

\textbf{Step~3.} Setting $\beta = (\alpha-1)/\alpha \in (0,1)$ and $C_\alpha = \exp(\beta \cdot D_\alpha)$:
\[
\boxed{\TPR_n \leq C_\alpha \cdot \delta_n^\beta}
\]

\textbf{Step~4.} If $\delta_n = c/n^p$ with $p > 1$ (summable), then $\sum \TPR_n \leq C_\alpha c^\beta \sum n^{-p\beta}$, which converges iff $p\beta > 1$. Choose $\alpha \in (p/(p-1),\, \alpha_0)$ (valid since $\alpha_0 > p/(p-1)$ by hypothesis), ensuring $p\beta > 1$ and $D_\alpha < \infty$.
\end{proof}

\textbf{Scope and limitations of Theorem~\ref{thm:holder}.} The impossibility is established for power-law risk schedules $\delta_n = O(n^{-p})$ with $p > 1$, which subsume all practically relevant risk budgets (geometric, polynomial, or faster decay). For slowly-decaying summable sequences (e.g., $\delta_n = 1/(n \ln^2 n)$), the per-step bound $\TPR_n \leq C_\alpha \cdot \delta_n^\beta$ still holds at each step, but $\sum C_\alpha \delta_n^\beta$ can diverge because $\beta < 1$ --- the H\"{o}lder exponent cannot compensate for the slow decay. In such edge cases, the NP-optimal classifier can in principle satisfy both dual conditions simultaneously, and the asymptotic impossibility does not apply. This is an inherent limitation of the H\"{o}lder-based proof technique, not an artifact of our analysis.

However, the practical significance of this gap is limited: the \emph{finite-horizon ceiling} (Theorem~\ref{thm:ceiling}) remains fully operative for all summable schedules, including these edge cases. Even under the most favorable slowly-decaying schedule, total classifier utility grows at most as $\exp(O(\sqrt{\log N}))$ --- subpolynomial --- while a verifier's utility grows linearly as $\Theta(N)$ (see \S\ref{sec:finite_horizon} for exact bounds). The impossibility is therefore sharp for power-law schedules; the finite-horizon gap is universal.

Two independent impossibility proofs (Theorems~\ref{thm:holder} and~\ref{thm:counting}) and the exact finite-horizon ceiling (Theorem~\ref{thm:ceiling}), supported by the information-theoretic rate bound (Proposition~\ref{prop:info}) and sample complexity barrier (Proposition~\ref{prop:sample}), confirm that the classification ceiling is robust and fundamental, not an artifact of any single proof technique.

\begin{remark}[On the per-step bound]
The per-step bound $\TPR_n \leq C_\alpha \cdot \delta_n^\beta$ is a standard f-divergence inequality~\cite{erven2014}. The contribution is \emph{sequential composition}: under summability of $\{\delta_n\}$, this elementary bound forces $\sum \TPR_n < \infty$. Two independent proofs (Theorems~\ref{thm:holder} and~\ref{thm:counting}) and three complementary bounds (Theorem~\ref{thm:ceiling}, Propositions~\ref{prop:info}--\ref{prop:sample}) confirm the coupling is robust and technique-independent.
\end{remark}

\begin{remark}[Necessity of Distribution Overlap]
The assumption $P^+ \ll P^-$ is structurally unavoidable: (i) if safe and unsafe modifications were perfectly separable, the indicator $\mathbf{1}_{\text{supp}(P^+)}$ would be a zero-error oracle and no gate would be needed; (ii) under full-support mutations and smooth safety boundaries, transversality ensures every feature-space neighborhood contains both safe and unsafe pre-images (see Appendix~\ref{app:supporting} for the full geometric argument); (iii) when the safety boundary is piecewise smooth and~$\mu$ is Gaussian, $D_\alpha(P^+ \| P^-) < \infty$ in a neighborhood of~1. Empirical confirmation: across three systems in~\cite{D2}, measured $\Delta_s \in [0.059, 0.091]$ --- well below the separability threshold.
\end{remark}

\subsection{Exponent-Optimality (Theorem~3)}
\label{sec:thm3}

\setcounter{theorem}{2}
\begin{theorem}[Exponent-Optimality of H\"{o}lder Bound]
\label{thm:exponent}
For Gaussian distributions $P^+ = \mathcal{N}(\mu, I_k)$ and $P^- = \mathcal{N}(0, I_k)$ with separation $\Delta_s = \|\mu\|$, the Neyman--Pearson optimal classifier achieves $\TPR_{\NP}(\delta) = \Phi(\Phi^{-1}(\delta) + \Delta_s)$, and the H\"{o}lder exponent $\beta^* = (\alpha^* - 1)/\alpha^*$ (with $\alpha^* = 1 + 2/\Delta_s^2$) is minimax-optimal:

\begin{enumerate}[(i)]
  \item No bound $\TPR \leq C' \cdot \delta^\gamma$ with $\gamma > \beta^*$ is valid uniformly over $\calP(D, \alpha) = \{(P^+, P^-) : D_\alpha(P^+ \| P^-) \leq D\}$.
  \item The ratio $\TPR_{\NP}(\delta) / (C_{\alpha^*} \cdot \delta^{\beta^*}) \to 0$ as $\delta \to 0$ (the NP classifier decays faster than the bound; Appendix~\ref{app:mills}), but at deployment-relevant $\delta \in [10^{-6}, 10^{-1}]$, the ratio ranges from 0.1 to 0.9 (Appendix~\ref{app:tightness_validation}).
\end{enumerate}
\end{theorem}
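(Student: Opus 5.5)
I will work entirely with the Gaussian pair and reduce to one dimension. The likelihood ratio $dP^+/dP^-(x) = \exp(\langle \mu, x\rangle - \|\mu\|^2/2)$ depends only on the projection $t = \langle \mu, x\rangle/\|\mu\|$. Under $P^-$ the projection is $\mathcal{N}(0,1)$, and under $P^+$ it is $\mathcal{N}(\Delta_s,1)$.

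By the Neyman--Pearson lemma, the optimal acceptance region at level $\delta$ is a half-space $\{t > \tau\}$ with $1-\Phi(\tau) = \delta$. Hence $\tau = -\Phi^{-1}(\delta)$, and using the symmetry of $\Phi$,
\[
\TPR_{\NP}(\delta) = 1 - \Phi(\tau - \Delta_s) = \Phi(\Phi^{-1}(\delta) + \Delta_s).
\]
Next I compute the R\'enyi divergence. The Gaussian identity $\int (dP^+/dP^-)^\alpha\, dP^- = \exp(\alpha(\alpha-1)\Delta_s^2/2)$ gives $D_\alpha = \alpha\Delta_s^2/2$. Then $\log C_\alpha = \beta D_\alpha = (\alpha-1)\Delta_s^2/2$. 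This supplies the family of H\"older bounds from Theorem~\ref{thm:holder} for this pair.

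For part (i) I argue by contradiction against a bound $\TPR \le C'\delta^\gamma$ holding uniformly over $\calP(D,\alpha)$. It suffices to exhibit members of the class, together with a sequence $\delta \to 0$, along which $\TPR/\delta^\gamma \to \infty$. I would use the Gaussian pair itself, with $\Delta_s$ fixed so that $D_{\alpha^*} = \alpha^*\Delta_s^2/2 \le D$. The key estimate is the Mills-ratio asymptotics of $\Phi(\Phi^{-1}(\delta)+\Delta_s)$. Writing $z = -\Phi^{-1}(\delta) \to \infty$, we have $\delta \sim \phi(z)/z$ and $\TPR \sim \phi(z-\Delta_s)/(z-\Delta_s)$. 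Therefore
\[
\log \TPR = \log\delta + \Delta_s z - \Delta_s^2/2 + o(1),
\]
and since $z \sim \sqrt{2\log(1/\delta)}$, the effective exponent $\log\TPR/\log\delta$ tends to $1$, which is larger than any $\gamma<1$.

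That naive conclusion conflicts with the claimed threshold $\beta^*$, and resolving the conflict is the main obstacle. A single fixed pair cannot witness failure at $\gamma$ slightly above $\beta^*$. The witnesses have to be varied within $\calP(D,\alpha)$: let $\Delta_s$ and the alternative shift with $\delta$, or use mixtures or non-Gaussian extremal pairs saturating H\"older. This is the standard sharpness construction, in which $dP^+/dP^-$ is taken proportional to $(dP^-$-density$)^{-1/\alpha}$ on a small set of $P^-$-mass $\delta$. On such a set, H\"older is an equality: $\TPR = e^{\beta D}\delta^\beta$ exactly. So any $\gamma > \beta$ fails as $\delta \to 0$, since $e^{\beta D}\delta^{\beta-\gamma} \to \infty$.

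Concretely, for each small $\delta$ I would build a two-point-likelihood-ratio pair: put ratio $r$ on a set of $P^-$-mass $\delta$ and ratio $r'$ on the complement. I would choose $r$ so that $\int (dP^+/dP^-)^\alpha\, dP^- \le e^{(\alpha-1)D}$, which yields $\TPR \approx r\delta$ with $r \approx e^{\beta D}\delta^{-1/\alpha}$. I would then identify $\beta^*$ by matching to the Gaussian calibration: at $\alpha^* = 1+2/\Delta_s^2$, the Gaussian divergence $D_{\alpha^*}$ fixes the class parameter. Mollifying the construction to put it on $\R^k$ is routine.

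For part (ii), the asymptotic above gives
\[
\frac{\TPR_{\NP}}{C_{\alpha^*}\delta^{\beta^*}} = \exp\bigl((1-\beta^*)\log\delta + \Delta_s z + O(1)\bigr).
\]
Because $\log\delta \sim -z^2/2$ dominates the linear term $\Delta_s z$, this ratio tends to $0$; the detailed expansion is deferred to Appendix~\ref{app:mills}. The numerical range $0.1$--$0.9$ on $[10^{-6},10^{-1}]$ is a direct evaluation of the closed forms, deferred to Appendix~\ref{app:tightness_validation}.
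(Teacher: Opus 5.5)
Your proposal is correct, and for part (i) it takes a genuinely different route from the paper. The paper (Appendix~\ref{app:mills}) derives both parts from the Mills'-ratio asymptotics of the single Gaussian NP curve. It shows $\log\TPR_{\NP}(\delta)/\log\delta\to 1>\beta^*$, and from this, together with the validity of the H\"older bound, it concludes that no exponent $\gamma>\beta^*$ can be used.

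As you point out, a fixed Gaussian pair cannot carry that conclusion. We have $\TPR_{\NP}(\delta)/\delta^\gamma=\exp\bigl((1-\gamma)\log\delta+\Delta_s z+O(1)\bigr)\to 0$ for every $\gamma<1$. So the Gaussian satisfies bounds with any $\gamma\in(\beta^*,1)$ and can only rule out $\gamma\ge 1$; the paper's Gaussian argument, taken alone, does not establish (i).

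Your two-point likelihood-ratio construction over the whole class $\calP(D,\alpha^*)$ is what actually proves (i). It also makes two things explicit:
\begin{itemize}
  \item The witnesses must be non-Gaussian. Gaussian shifts with $D_{\alpha^*}\le D$ have $\Delta_s\le\sqrt{2D/\alpha^*}$, and the NP curve is increasing in $\Delta_s$. So the option you mention of letting $\Delta_s$ vary with $\delta$ is not available inside the class, and the non-Gaussian pairs are necessary, not just convenient.
  \item $\beta^*$ enters only by taking the class order to be $\alpha=\alpha^*$. For $\alpha>\alpha^*$, H\"older at order $\alpha$ is a valid uniform bound with exponent $(\alpha-1)/\alpha>\beta^*$, so (i) would fail.
\end{itemize}
For (ii) your argument coincides with the paper's, and the Gaussian asymptotics are the right tool there.

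Two small corrections.
\begin{itemize}
  \item Exact H\"older equality $\TPR=e^{\beta D}\delta^\beta$ is not attainable at fixed $D$ as $\delta\to0$. Normalization forces the ratio on the complement to be $r'=(1-r\delta)/(1-\delta)\to1$, which contributes about $1$ to $\int(dP^+/dP^-)^\alpha\,dP^-$. Instead take $r=(K/\delta)^{1/\alpha}$ with $K=e^{(\alpha-1)D}-1$. For small $\delta$ this gives $r\ge1$, hence $r'\le1$, hence $D_\alpha\le D$, with $\TPR=K^{1/\alpha}\delta^\beta$. That is all you need, and it requires only $D>0$.
  \item No mollification is needed. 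Define $P^+$ by this piecewise-constant density with respect to $P^-=\mathcal{N}(0,I_k)$, taking the small set to be a half-space of $P^-$-mass $\delta$.
\end{itemize}
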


\begin{proof}[Proof sketch]
The NP likelihood-ratio test $\mu^T x \gtrless t_\delta$ yields $\TPR_{\NP}(\delta) = \Phi(\Phi^{-1}(\delta) + \Delta_s)$. Asymptotic analysis via Mills' ratio (Appendix~\ref{app:mills}) shows the NP classifier's log-exponent matches~$\beta^*$ as an upper envelope: no valid impossibility bound can use a larger exponent. At finite~$\delta$ values relevant to deployment, $\TPR_{\NP}/\text{H\"{o}lder}$ ranges from $\approx 0.1$ (at $\Delta_s = 2.0$) to $\approx 0.9$ (at $\Delta_s = 0.1$); see Appendix~\ref{app:tightness_validation}.
\end{proof}

\begin{corollary}[Minimax Optimality]
The exponent $\beta^*$ is minimax-optimal over
\[
\calP(D, \alpha) = \{(P^+, P^-) : D_\alpha(P^+ \| P^-) \leq D\}\,;
\]
any valid impossibility bound satisfies $f(\delta) = \Omega(\delta^{\beta^*})$.
\end{corollary}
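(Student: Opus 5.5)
The plan is to exhibit, for every small $\delta$, a pair in $\calP(D,\alpha)$ together with a test whose false acceptance rate is $\delta$ and whose true positive rate is at least $c\,\delta^{\beta}$, with $\beta=(\alpha-1)/\alpha$ and $c>0$ depending only on $(D,\alpha)$. Any valid bound $f$ must dominate the TPR of every such pair, so $f(\delta)\ge c\,\delta^\beta$. Taking $\alpha=\alpha^*$ gives $f(\delta)=\Omega(\delta^{\beta^*})$.

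Combined with Step~3 of Theorem~\ref{thm:holder}, which gives $\TPR\le C_\alpha\delta^\beta$ uniformly over the class, the exponent $\beta$ is then both achieved and unimprovable, and this is the minimax statement. It also yields part~(i) of Theorem~\ref{thm:exponent}: if $\gamma>\beta^*$, then $c\,\delta^{\beta^*}/(C'\delta^\gamma)\to\infty$ as $\delta\to 0$, so no bound $C'\delta^\gamma$ can be valid.

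I will \emph{not} use the Gaussian pairs themselves as the extremal family. By the Mills' ratio asymptotics of Appendix~\ref{app:mills}, with $t=-\Phi^{-1}(\delta)\sim\sqrt{2\log(1/\delta)}$,
\[
\TPR_{\NP}(\delta)\approx \delta\, e^{t\Delta_s-\Delta_s^2/2}.
\]
This is $\delta^{1-o(1)}$, which is consistent with part~(ii), where the ratio to the H\"older bound tends to $0$. So Gaussians cannot witness an $\Omega(\delta^{\beta})$ lower bound, and a heavier likelihood-ratio tail is needed.

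The key construction is a two-point (or smoothed two-region) pair. Let $P^-$ put mass $\varepsilon$ on a set $E$ and $P^+$ put mass $q=K^{1/\alpha}\varepsilon^{\beta}$ on $E$, with the remaining mass on $E^c$ and proportional densities within each region. Then $q^\alpha\varepsilon^{1-\alpha}=K$, and
\[
\int\Bigl(\frac{dP^+}{dP^-}\Bigr)^{\alpha}dP^- = K+(1-q)^\alpha(1-\varepsilon)^{1-\alpha}\le K+(1-\varepsilon)^{1-\alpha}.
\]
Choose $K>0$ small enough, for example $K=\tfrac12\bigl(e^{(\alpha-1)D}-1\bigr)$, that $D_\alpha\le D$ for all $\varepsilon\le\varepsilon_0(D,\alpha)$. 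This is possible because $(1-\varepsilon)^{1-\alpha}\to1$. The test accepting exactly $E$ has $\delta=\varepsilon$ and $\TPR=K^{1/\alpha}\delta^\beta$. Absolute continuity $P^+\ll P^-$ holds by construction. To stay in $\R^k$ with densities, take $E$ a ball and use uniform or Gaussian-mollified densities on $E$ and on its complement; the divergence computation is unchanged.

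The main obstacle is bookkeeping rather than depth. First, the constraint $D_\alpha\le D$ must hold uniformly as $\varepsilon\to0$, which is handled by the choice of $K$ and $\varepsilon_0$ above. Second, the corollary should be read with $\alpha$ fixed in the class definition, and then specialized to $\alpha=\alpha^*=1+2/\Delta_s^2$. At that point I will note that the Gaussian pair with separation $\Delta_s$ has $D_{\alpha^*}=\alpha^*\Delta_s^2/2$ and lies in $\calP(D,\alpha^*)$ for $D\ge \alpha^*\Delta_s^2/2$. This makes the class nonvacuous and ties the statement to the Gaussian setting of Theorem~\ref{thm:exponent}, while the extremal witness remains the two-region pair.
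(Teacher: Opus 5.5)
Your argument is correct, and it takes a genuinely different route from the paper's, which is a single-pair Gaussian argument. Via Mills' ratio (Appendix~\ref{app:mills}), the paper shows $\log\TPR_{\NP}(\delta)/\log\delta \to 1$ for the Gaussian NP curve. From this it concludes that no exponent $\gamma > \beta^*$ is admissible. As you observe, however, $\TPR_{\NP}(\delta) = \delta^{1-o(1)}$ only rules out exponents $\gamma > 1$. It is compatible with a valid bound $C'\delta^\gamma$ for every $\gamma \in (\beta^*, 1)$, so the Gaussian computation by itself does not deliver $f(\delta) = \Omega(\delta^{\beta^*})$.

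Your $\delta$-dependent two-region family stays inside $\calP(D,\alpha)$ and saturates the H\"older bound up to the constant $K^{1/\alpha}$. This is what actually proves the uniform lower bound. The dependence on $\delta$ is unavoidable, not merely convenient. For any fixed pair with $\int (dP^+/dP^-)^\alpha\,dP^- < \infty$, keep the integral restricted to $A$ in Step~1 of Theorem~\ref{thm:holder}. This gives $P^+(A) \le \bigl(\int_A (dP^+/dP^-)^\alpha\, dP^-\bigr)^{1/\alpha}\delta^\beta = o(\delta^\beta)$ uniformly over sets with $P^-(A) \le \delta$. Hence no single pair, Gaussian or otherwise, can witness the claim.

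What each route buys: the paper's Gaussian route supplies the concrete link to $\alpha^* = 1 + 2/\Delta_s^2$ and the finite-$\delta$ tightness ratios of part~(ii). Yours supplies the minimax statement itself, and part~(i) with it.

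Three small repairs are needed.
\begin{enumerate}
\item Assume $D > 0$, since your choice of $K$ requires it.
\item Realize the regions by restricting a common reference density (e.g.\ a standard Gaussian) to $E$ and $E^c$ with the prescribed masses. A ``uniform'' density on the unbounded set $E^c$ is undefined. Mollifying would make the likelihood ratio non-constant on each region and change your divergence computation.
\item Keep the reading $\alpha = \alpha^*$ explicit. For $\alpha > \alpha^*$, the bound $C_\alpha \delta^{(\alpha-1)/\alpha}$ is valid over $\calP(D,\alpha)$ and is $o(\delta^{\beta^*})$, so the corollary would be false under that reading.
\end{enumerate}
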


The bound is also tight for non-Gaussian distributions: across 8 families (Laplace, Student-$t$, Gaussian mixture), the NP classifier achieves 28--70\% of the H\"{o}lder ceiling (Appendix~\ref{app:nongaussian}).

\subsection{NP Counting Impossibility (Theorem~4)}
\label{sec:thm4}

We provide a fundamentally different proof of the classification impossibility that avoids H\"{o}lder's inequality and R\'{e}nyi divergence entirely, using only the Neyman--Pearson lemma and Tonelli's theorem.

\begin{theorem}[NP Counting Impossibility]
\label{thm:counting}
Let $P^+ \ll P^-$ with $D_\alpha(P^+ \| P^-) < \infty$ for some $\alpha > 1$. For any summable risk schedule $\delta_n = c/n^p$ with $p > 1$ and any sequence of classifiers:
\[
\sum_{n=1}^\infty \TPR_n \leq c^{1/p} \cdot \mathbb{E}_{P^+}\!\left[P^-(L > L(X))^{-1/p}\right] < \infty
\]
where $L(x) = dP^+/dP^-(x)$ is the likelihood ratio.
\end{theorem}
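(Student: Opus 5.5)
Each classifier is reduced to a Neyman--Pearson test, and the resulting sum is exchanged with an expectation over $P^+$ via Tonelli's theorem. The risk schedule is read as $\delta_n \le c/n^p$.

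\textbf{Step 1 (NP reduction).} Write $L = dP^+/dP^-$. By the Neyman--Pearson lemma, among all acceptance regions $A$ with $P^-(A) \le \delta_n$, the true positive rate $P^+(A)$ is maximized by a (possibly randomized) upper level set of $L$. Concretely, with $\bar F(t) = P^-(L > t)$, the optimal region accepts $x$ whenever $\bar F(L(x)) < \delta_n$, plus a randomized part of the atom $\{L = t_n\}$. Since $x \mapsto \bar F(L(x))$ is monotone nonincreasing in $L(x)$, I bound the optimal test crudely by
\[
\TPR_n \le P^+\bigl(\bar F(L(X)) \le \delta_n\bigr) = \mathbb{E}_{P^+}\bigl[\mathbf{1}\{\bar F(L(X)) \le \delta_n\}\bigr].
\]
To handle atoms, I check that every point of the NP region, including the randomized boundary part, satisfies $P^-(L > L(x)) \le \delta_n$. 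This holds because the boundary points have $L(x) = t_n$ and $P^-(L > t_n) \le \delta_n$ by the choice of $t_n$. No optimality beyond this inclusion is needed.

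\textbf{Step 2 (Tonelli counting).} Sum over $n$ and exchange sum and expectation, which Tonelli permits since the summands are nonnegative:
\[
\sum_n \TPR_n \le \mathbb{E}_{P^+}\Bigl[\#\{n : \bar F(L(X)) \le c n^{-p}\}\Bigr].
\]
For fixed $x$ with $u = \bar F(L(x))$, the condition $u \le c n^{-p}$ is equivalent to $n \le (c/u)^{1/p}$. So the count is at most $c^{1/p} u^{-1/p}$, which gives the displayed bound $c^{1/p}\,\mathbb{E}_{P^+}[P^-(L > L(X))^{-1/p}]$. When $u = 0$ the count could be infinite, but then the bound is infinite too, so Step 3 has to show that $u = 0$ has $P^+$-measure zero.

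\textbf{Step 3 (finiteness, the main obstacle).} I need $\mathbb{E}_{P^+}[\bar F(L)^{-1/p}] < \infty$, and this is where $D_\alpha < \infty$ enters. First, a tail bound: by Markov's inequality applied to $L^\alpha$ under $P^-$,
\[
\bar F(t) \le t^{-\alpha} e^{(\alpha-1)D_\alpha}.
\]
This gives an upper bound on $\bar F$, but what I need is a lower bound on $\bar F(L(X))$ under $P^+$. So instead I use a distributional argument. Let $U = \bar F(L(X))$ with $X \sim P^-$; a standard probability-integral-transform fact gives $P^-(U \le s) \le s$. Since $L$ is a nonincreasing function of $U$, I can pass to $P^+$:
\[
P^+(U \le s) = \mathbb{E}_{P^-}\bigl[L\,\mathbf{1}\{U \le s\}\bigr] \le \|L\|_{L^\alpha(P^-)}\, s^{\beta},
\]
by H\"older with $\beta = (\alpha-1)/\alpha$. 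Alternatively, and avoiding H\"older in keeping with the theorem's spirit, the top-$s$ mass of $P^-$ carries $P^+$-mass at most $\int_0^s \bar F^{-1}$, which is controlled by the Markov tail bound above.

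Then
\[
\mathbb{E}_{P^+}[U^{-1/p}] = \int_0^\infty P^+\bigl(U^{-1/p} > r\bigr)\,dr,
\]
and this is finite when $\beta > 1/p$. That requires $\alpha > p/(p-1)$, mirroring Step 4 of Theorem~\ref{thm:holder}.

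The obstacle is that the theorem as stated only assumes $D_\alpha < \infty$ for \emph{some} $\alpha > 1$. I would therefore first try to prove finiteness under exactly the hypothesis of Theorem~\ref{thm:holder}, namely $\alpha > p/(p-1)$. If finiteness genuinely fails for smaller $\alpha$, I would record that the displayed expectation bound holds unconditionally, with finiteness requiring the stronger moment condition.
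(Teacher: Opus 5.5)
Your Steps 1 and 2 match the paper's steps (1)--(3): the Neyman--Pearson reduction, the Tonelli interchange, and the count bound $(c/u)^{1/p}$. Your inclusion check for the randomized boundary is more careful than the paper's threshold count $N(\ell)=|\{n: c_{\delta_n}<\ell\}|$, which ignores atoms.

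Step 3 takes a genuinely different route. The paper writes the expectation as $\int u^{-1/p} f_U(u)\,du$ and posits a density bound $f_U(u)=O(u^\eta)$ with $\eta>1/p-1$. It then asserts this holds under $D_\alpha<\infty$ ``for sufficiently large $\alpha$, which is guaranteed by the hypothesis.'' Your tail bound $P^+(U\le s)\le C_\alpha s^\beta$, combined with the layer-cake formula, derives the exponent from $D_\alpha$ instead of assuming it. It shows exactly what is needed: $\beta>1/p$, i.e.\ $\alpha>p/(p-1)$.

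Your hedge is warranted: ``some $\alpha>1$'' does not suffice, and the theorem is false as stated. Take $P^-=\mathrm{Unif}[0,1]$, $dP^+/dP^-(x)=\tfrac12 x^{-1/2}$, $p=2$, $\alpha=3/2$. Then $\int L^{3/2}\,dP^-<\infty$ and $U(x)=x$. The NP gate at level $\delta_n=c/n^2$ ($c\le 1$) accepts $[0,\delta_n]$ with $\TPR_n=\sqrt{c}/n$, so $\sum\TPR_n=\infty$. More generally, the density $(1-\gamma)x^{-\gamma}$ with $1-1/p\le\gamma<1/\alpha$ gives a counterexample whenever $\alpha<p/(p-1)$. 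So recording finiteness under $\alpha>p/(p-1)$, exactly the hypothesis of Theorem~\ref{thm:holder}, is the correct conclusion. The paper's step (4) does not establish more, since ``sufficiently large $\alpha$'' is not implied by ``some $\alpha>1$.''

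There is, however, a genuine gap in your Step 3. The ``standard probability-integral-transform fact'' $P^-(U\le s)\le s$ holds for $P^-(L\ge L(X))$. It fails for the strict version $U=P^-(L>L(X))$ whenever $L$ has an atom under $P^-$: if $P^-(L=t)=m>0$ and $a=P^-(L>t)$, then $P^-(U\le a)\ge a+m$.

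This is not cosmetic. The point you deferred, that $\{U=0\}$ is $P^+$-null, can fail for every $\alpha$. With $P^+=P^-$ one has $L\equiv 1$ and $U\equiv 0$, so the right-hand side is $+\infty$, even though $D_\alpha=0$ and $\sum\TPR_n=\sum\delta_n<\infty$.

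You have two ways to repair this:
\begin{enumerate}
\item Assume $L(X)$, $X\sim P^-$, is atomless. This holds in the Gaussian case and is implicitly assumed by the paper's density $f_U$. Then $U$ is exactly uniform under $P^-$ and your argument is complete.
\item Replace $U$ by the randomized p-value $V=P^-(L>L(X))+W\,P^-(L=L(X))$, with $W\sim\mathrm{Unif}(0,1)$ independent. The level-$\delta$ NP gate is then exactly $\{V\le\delta\}$, and $V$ is uniform under $P^-\otimes\mathrm{Unif}(0,1)$. Moreover, $P^+\otimes\mathrm{Unif}(0,1)$ has density $L(x)$ with respect to $P^-\otimes\mathrm{Unif}(0,1)$. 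Your Steps 1--3 then go through verbatim with $V$ in place of $U$, giving $\sum\TPR_n\le c^{1/p}\,\mathbb{E}[V^{-1/p}]<\infty$ for $\alpha>p/(p-1)$, with the expectation under $P^+\otimes\mathrm{Unif}(0,1)$.
\end{enumerate}

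Your Markov-based alternative also works in the atomless case, with constant $C_\alpha/\beta$. It uses the same $\alpha$-th moment of $L$, however, so neither version of the finiteness step is really H\"{o}lder-free.
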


\begin{proof}[Proof sketch]
(1)~By NP optimality, $\TPR_n \leq \TPR_{\NP}(\delta_n)$. (2)~Define the counting function $N(\ell) = |\{n : c_{\delta_n} < \ell\}|$; Tonelli's theorem gives $\sum_n \TPR_{\NP}(\delta_n) = \mathbb{E}_{P^+}[N(L(X))]$. (3)~Bound $N(\ell) \leq (c/P^-(L > \ell))^{1/p}$. (4)~Finiteness via p-value density integrability. Full proof in Appendix~\ref{app:counting_full}.
\end{proof}

The counting bound is strictly tighter than the H\"{o}lder bound: 1.76 vs 2.03 at $\Delta_s = 1.0, p = 2.0$ (13\% improvement). See Appendix~\ref{app:nongaussian} for complete validation including non-Gaussian distributions.

Two additional supporting results are in the appendix: the information-theoretic finite-horizon bound (Proposition~\ref{prop:info}, Appendix~\ref{app:info_bound}), which constrains the \emph{rate} of utility accumulation via mutual information ($\sum \TPR_n \leq \sum\delta_n + \sqrt{2NI_0}$); and the sample complexity barrier (Proposition~\ref{prop:sample}, Appendix~\ref{app:sample_barrier}), which shows that \emph{learning} a gate satisfying the dual conditions requires exponentially growing training sets, independent of Theorem~\ref{thm:holder}. The Gaussian specialization (Appendix~\ref{app:gaussian}) and non-stationary extension with self-correcting structure (Appendix~\ref{app:nonstationary}) provide additional theoretical depth.

\section{The Verification Escape}
\label{sec:escape}

\subsection{Statement (Theorem~2)}

\setcounter{theorem}{1}
\begin{theorem}[Verification Escape]
\label{thm:escape}
There exists a verification-based gate achieving:
\begin{itemize}
  \item $\delta_n = 0$ for all $n$ (zero false acceptance)
  \item $\sum \TPR_n = \infty$ (unbounded utility)
\end{itemize}
\end{theorem}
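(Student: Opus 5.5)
The plan is to construct an explicit Lipschitz ball verifier and check the two properties separately: soundness gives $\delta_n = 0$, and a uniform lower bound on the per-step acceptance probability gives $\sum \TPR_n = \infty$.

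\textbf{Setup.} Fix a reference parameter $\theta_0$ that is certified safe with margin $m > 0$. That is, the trajectory produced by $\theta_0$ stays at distance at least $m$ from the obstacle (unsafe) set. Suppose also that the trajectory map $\theta \mapsto \tau(\theta)$ is $L$-Lipschitz in the relevant norm. Define the gate
\[
g(\theta) = \text{accept} \iff \|\theta - \theta_0\| < r, \qquad r = m/L .
\]

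\textbf{Step 1: soundness.} If $\|\theta - \theta_0\| < r$, then $\|\tau(\theta) - \tau(\theta_0)\| \le L\|\theta-\theta_0\| < m$. Hence the trajectory of $\theta$ cannot reach the obstacle set, and $S(\theta) = \text{safe}$. So every accepted mutation is safe, the gate is sound in the sense of the Definition, and $\delta_n = P^-(g \text{ accepts}) = 0$ for every $n$. This holds regardless of the mutation distribution, since $P^-$ assigns zero mass to the ball.

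\textbf{Step 2: positive, non-vanishing TPR.} Let mutations be drawn as $\theta_n = \theta_0 + \sigma \xi_n$ with $\xi_n \sim \mathcal{N}(0, I_d)$; this is the stationary setting. Then
\[
\TPR_n \;\ge\; \Pr\bigl(\sigma\|\xi\| < r\bigr)\big/\Pr\bigl(S(\theta_n)=\text{safe}\bigr) \;\ge\; \Pr\bigl(\|\xi\| < r/\sigma\bigr) =: \tau_0 > 0,
\]
because a Gaussian has full support, so any open ball has positive mass. The constant $\tau_0$ does not depend on $n$. Choosing $\sigma$ comparable to $r/\sqrt d$ (the $\sigma^*$ of the notation table) keeps $\tau_0$ bounded away from $0$ even in high dimension, since $\|\xi\|$ concentrates near $\sqrt d$. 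Therefore
\[
\sum_{n=1}^\infty \TPR_n \ge \sum_{n=1}^\infty \tau_0 = \infty .
\]

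\textbf{Main obstacle.} The delicate point is Step 2's stationarity: the reference point and margin must not degrade from step to step. I will avoid re-centering entirely by keeping $\theta_0$ fixed, which makes each step identically distributed and the lower bound immediate. I will then remark that re-centering at accepted points also works, provided each accepted point is itself certified with margin at least some $m_0 > 0$ and $L$ is uniformly bounded, so that $r \ge m_0/L$ for all steps. The other potential subtlety is defining $\TPR_n$ when $P^+$ is conditioned on safety. The inequality above handles this, since conditioning on safety can only increase the ratio when the ball lies inside the safe set.
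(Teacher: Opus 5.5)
Your proposal is correct and matches the paper's proof. It uses the same Lipschitz ball verifier with $r = m/L$, gets soundness (hence $\delta_n = 0$) from the Lipschitz bound plus the margin, and uses a step-independent positive Gaussian acceptance mass $P(\chi^2_d < r^2/\sigma^2)$, bounded away from zero when $\sigma \le r/\sqrt{d}$, to conclude $\sum \TPR_n = \infty$. Your explicit handling of the conditioning on safety, $\TPR_n = P(\|\xi\| < r/\sigma)/P(\text{safe}) \geq \tau_0$ via soundness, slightly tidies the paper's coverage step, which simply identifies $\TPR$ with the unconditional ball mass.
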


\subsection{Construction: Lipschitz Ball Verifier}
\label{sec:ball_construction}

Let $\theta_0$ be a controller verified safe on a defined operating domain $\calD = \{(s_i, t_i)\}_{i=1}^M$ of $M$ start--target scenarios. Let $m > 0$ be the \emph{safety margin}: the minimum distance to any obstacle across all scenarios:
\[
m = \min_{i \in [M]} \min_{t \in [0,T]} d(\text{traj}_{\theta_0}(t; s_i, t_i), \text{obstacles})
\]
Let $L$ be a (conservative) Lipschitz constant for the closed-loop trajectory map with respect to controller parameters:
\[
\sup_{(s,t) \in \calD} \|\text{traj}_\theta(s,t) - \text{traj}_{\theta_0}(s,t)\|_\infty \leq L \cdot \|\theta - \theta_0\|
\]

\textbf{Ball verifier:} Accept $\theta$ iff $\|\theta - \theta_0\| < r$, where $r = m/L$.

\begin{proof}[Soundness]
If $\|\theta - \theta_0\| < r$, then the trajectory deviation is at most $L \cdot r = m$. Since $\theta_0$ has margin $m$, the trajectory of $\theta$ stays at distance $> 0$ from all obstacles on all scenarios in~$\calD$. No unsafe controller is accepted.
\end{proof}

\begin{remark}
The construction uses standard tools (Lipschitz continuity, the triangle inequality; cf.\ \citealt{berkenkamp2017}). The contribution is the \emph{quantified dichotomy} within the dual conditions framework: the same safety task is provably impossible for any classifier (Theorem~\ref{thm:holder}) yet achievable by a simple $O(d)$ verifier. Classifiers face a ceiling of $\exp(O(\sqrt{\log N}))$ utility (Theorem~\ref{thm:ceiling}) while verifiers achieve $\Theta(N)$; the structural separation (Proposition~\ref{prop:separation}) establishes that this gap is architectural, not a matter of degree.
\end{remark}

\textbf{Coverage:} For mutations $\Delta\theta \sim \mathcal{N}(0, \sigma^2 I_d)$:
\[
\TPR = P(\|\Delta\theta\| < r) = P(\chi^2_d < r^2/\sigma^2)
\]
This is bounded away from zero when $r \geq \sigma\sqrt{d}$, i.e., when $m/(L\sigma\sqrt{d}) \geq 1$. For smaller mutation scales $\sigma$, $\TPR \to 1$.

\textbf{Computational cost:} The gate checks $\|\theta - \theta_0\| < r$, an $O(d)$ operation. No simulation, no oracle calls.

\subsection{Domain-Restricted Safety}
\label{sec:domain_safety}

The verifier guarantees $\calD$-safety: safety on a defined operating domain~$\calD$, standard in formal methods (DO-178C, ISO~26262). A natural objection is that this is weaker than what classifiers \emph{attempt} --- generalization to unseen scenarios. We argue the comparison is more symmetric than it appears.

\textbf{Classifiers are equally domain-restricted, but probabilistically.} Classical generalization bounds~\cite{vapnik1971,bartlett2002} guarantee classifier accuracy only on the training distribution~$\calP$: for a classifier with VC-dimension~$h$, the empirical risk converges to true risk at rate $O(\sqrt{h \log(n)/n})$ \emph{on samples from}~$\calP$. If the operating domain shifts --- new obstacle configurations, new task distributions --- the classifier requires retraining on the new distribution to maintain its guarantees. This is the statistical analogue of the verifier's geometric domain restriction.

\textbf{The guarantee types differ structurally.} The verifier's domain restriction is \emph{deterministic}: for all $\theta \in B(\theta_0, r)$ and all scenarios in~$\calD$, safety holds with certainty ($\delta = 0$). The classifier's domain restriction is \emph{probabilistic}: for most~$\theta$ drawn from the training distribution, the classifier's prediction is correct with probability $1 - \epsilon$. The verifier provides a \emph{certificate}; the classifier provides a \emph{statistical estimate}. Both require re-validation if the domain changes, but the verifier's guarantee within its domain is exact while the classifier's is approximate.

\textbf{Both gates face the same test.} In the dual conditions framework, both gates are evaluated on mutations from the same distribution $P^+, P^-$; neither has access to out-of-distribution mutations. The Theorem~\ref{thm:holder} impossibility applies to \emph{any} binary gate operating on these distributions, regardless of how the gate was trained or whether it generalizes beyond them. The comparison in Theorem~\ref{thm:ceiling} --- classifier utility $\exp(O(\sqrt{\log N}))$ vs.\ verifier utility $\Theta(N)$ --- holds within the shared operating domain.

Formal transformer Lipschitz bounds under LoRA perturbation (Proposition~\ref{prop:transformer}) are stated in Appendix~\ref{app:transformer_lip} (full derivation in~\ref{app:transformer_derivation}), enabling compositional verification at LLM scale.

\section{The Separation Principle (Proposition~4)}
\label{sec:separation}

\setcounter{proposition}{3}
\begin{proposition}[Structural Classification--Verification Separation]
\label{prop:separation}
Under $P^+ \ll P^-$:
\begin{enumerate}[(i)]
  \item For any classifier, $\delta = 0 \implies \TPR = 0$.
  \item There exists a verification gate with $\delta = 0$ and $\TPR > 0$.
  \item The separation is strict: as $\delta \to 0$, classifiers satisfy $\TPR \to 0$ (Theorem~\ref{thm:holder}), while the verifier maintains constant $\TPR_V > 0$ at $\delta_V = 0$ (Theorem~\ref{thm:escape}).
\end{enumerate}
\end{proposition}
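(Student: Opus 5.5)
Part~(i) follows directly from absolute continuity. Fix a classifier gate with acceptance region $A = \{x : g(x) = \text{accept}\}$ in feature space, where $A$ is a measurable set. Having $\delta = 0$ means $P^-(A) = 0$. Since $P^+ \ll P^-$, every $P^-$-null set is also $P^+$-null, so $\TPR = P^+(A) = 0$. Equivalently, one can set $\delta = 0$ in the per-step bound $\TPR \leq C_\alpha \delta^\beta$ from Step~3 of Theorem~\ref{thm:holder}, which gives $\TPR \le 0$. This route needs $D_\alpha < \infty$ for some $\alpha > 1$, whereas the absolute-continuity argument needs no divergence assumption, so I would give the latter as the proof and mention the former as a remark.

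For part~(ii), I would use the Lipschitz ball verifier from \S\ref{sec:ball_construction}. Its soundness proof shows that every accepted $\theta$ is safe on $\calD$, so $\delta_V = 0$ by construction. For positivity, mutations are $\theta = \theta_0 + \Delta\theta$ with $\Delta\theta \sim \mathcal{N}(0,\sigma^2 I_d)$, and the coverage computation gives $\TPR_V = P(\chi^2_d < r^2/\sigma^2)$. This is strictly positive for every $r = m/L > 0$, because the Gaussian has full support and so gives positive mass to any open ball. The conditional version also needs care: ``$\TPR$'' is $P^+(\text{accept})$, which equals $P(\|\Delta\theta\|<r)/P(\text{safe})$. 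This is still positive, since the ball is contained in the safe set by soundness, and hence $P(\text{safe}) \ge P(\text{ball}) > 0$.

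The main obstacle is reconciling (ii) with (i), since both are stated under $P^+ \ll P^-$. The verifier's acceptance region $B(\theta_0,r)$ lies in parameter space $\R^d$, where it contains only safe parameters. A classifier, by definition, acts through the feature map $\varphi$, and the overlap hypothesis concerns the pushforwards $P^\pm$ on $\R^k$. So the proof must state explicitly that the ball $B(\theta_0,r)$ is not of the form $\varphi^{-1}(A)$ for a set $A$ with $P^-(A) = 0$ and $P^+(A)>0$. If it were, part~(i) applied to $A$ would force $P^+(A) = 0$, a contradiction. The verifier therefore uses information, namely the parameters themselves together with a Lipschitz certificate, that is not available through the overlapping feature representation. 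I would write this paragraph carefully, because it is what makes the separation structural rather than contradictory.

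For (iii), I would combine two facts. First, by Step~3 of Theorem~\ref{thm:holder}, any classifier satisfies $\TPR \le C_\alpha \delta^\beta \to 0$ as $\delta \to 0$, uniformly over classifiers once $D_\alpha<\infty$ is fixed. Second, the ball verifier is a single fixed gate with $\delta_V = 0$ and a constant $\TPR_V > 0$. Using it at every step gives $\sum_n \TPR_V = \infty$, which recovers Theorem~\ref{thm:escape}. Taking $\delta$ small enough that $C_\alpha\delta^\beta < \TPR_V$ then exhibits a strict gap at every sufficiently small risk level, and the limiting case $\delta = 0$ is already covered by part~(i).
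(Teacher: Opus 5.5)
Your proposal is correct and follows the paper's proof part by part: absolute continuity forces $P^+(A)=0$ whenever $P^-(A)=0$ for (i), soundness of the Lipschitz ball together with its positive Gaussian mass for (ii), and the H\"older per-step bound $\TPR \le C_\alpha \delta^\beta \to 0$ set against the constant $\TPR_V>0$ for (iii). Your extra paragraph reconciling (i) with (ii) makes explicit a consistency point the paper's proof leaves implicit, and is worth keeping: the ball is an event in parameter space that cannot equal $\varphi^{-1}(A)$ for any feature set with $P^-(A)=0<P^+(A)$, and in parameter space the safe and unsafe conditional laws are mutually singular, so the overlap hypothesis can only hold after the lossy map $\varphi$.
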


\begin{proof}
(i)~If $P^-(A) = 0$, absolute continuity gives $P^+(A) = 0$. (ii)~The ball $B(\theta_0, r)$ has $\delta = 0$ (Theorem~\ref{thm:escape}) and positive Gaussian mass. (iii)~By Theorem~\ref{thm:holder}, $\TPR_{\text{class}}(\epsilon) \leq C_\alpha \epsilon^\beta \to 0$, while $\TPR_V > 0$ is independent of~$\epsilon$.
\end{proof}

Comprehensive experimental validation across 18 classifier configurations (\cite{D2}~\S4.1--4.3), MuJoCo benchmarks (\cite{D2}~\S4.5), and LLM-scale ball chaining (\cite{D2}~\S5.7) is presented in the companion paper~\cite{D2}.

\section{Finite-Horizon Analysis}
\label{sec:finite_horizon}

For practical deployment over $N$ steps with risk budget $B = \sum \delta_n$, we establish the \emph{exact} utility ceiling.

\subsection{Tight Finite-Horizon Ceiling (Theorem~5)}

The H\"{o}lder--Jensen ceiling $C_\alpha \cdot N^{1-\beta} \cdot B^\beta$ (Appendix~\ref{app:hj_ceiling}) is not tight: it applies H\"{o}lder's inequality to each step individually and then uses Jensen to optimize allocation. By using the exact NP curve directly, we obtain the \emph{tight} ceiling.

\setcounter{theorem}{4}
\begin{theorem}[Tight Finite-Horizon Ceiling]
\label{thm:ceiling}
For $N$-step deployment with total risk budget $B = \sum_{n=1}^N \delta_n$, the exact maximum achievable utility is:
\[
U^*(N, B) = N \cdot \TPR_{\NP}(B/N)
\]
where $\TPR_{\NP}(\delta) = \Phi(\Phi^{-1}(\delta) + \Delta_s)$ is the Neyman--Pearson optimal TPR. The optimal allocation is uniform $\delta_n = B/N$ (by concavity of the NP curve and Jensen's inequality).

For Gaussian distributions with separation $\Delta_s$, the exact growth rate is:
\[
U^*(N, B) = \Theta\!\left(\frac{\exp\!\big(\Delta_s\sqrt{2\ln(N/B)}\big)}{\sqrt{\ln(N/B)}}\right)
\]
which is subpolynomial: $U^*(N, B) = o(N^\epsilon)$ for every $\epsilon > 0$.
\end{theorem}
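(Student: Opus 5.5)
The proof has two parts: an optimization argument that gives the exact ceiling, and a Gaussian tail computation that gives the growth rate.

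\textbf{Upper bound.} Fix any $N$-step sequence of classifiers with false acceptance rates $\delta_1,\dots,\delta_N$ summing to $B$. By the Neyman--Pearson lemma (as in Step~(1) of Theorem~\ref{thm:counting}), each step satisfies $\TPR_n \le \TPR_{\NP}(\delta_n)$. Hence the total utility is at most $\sum_n \TPR_{\NP}(\delta_n)$.

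\textbf{Concavity and Jensen.} Next I will show that $\delta \mapsto \TPR_{\NP}(\delta) = \Phi(\Phi^{-1}(\delta)+\Delta_s)$ is concave on $[0,1]$. Put $z=\Phi^{-1}(\delta)$. Then the slope is
\[
\frac{d\,\TPR_{\NP}}{d\delta} = \frac{\varphi(z+\Delta_s)}{\varphi(z)} = \exp\!\big(-\Delta_s z - \Delta_s^2/2\big),
\]
which is the likelihood ratio at the threshold. It is decreasing in $z$, and therefore decreasing in $\delta$. This gives concavity; more generally, any NP ROC curve is concave, since randomized tests let you mix operating points. Jensen's inequality then yields
\[
\sum_n \TPR_{\NP}(\delta_n) \le N\,\TPR_{\NP}(B/N).
\]
The bound is attained by the uniform NP test with $\delta_n = B/N$, which establishes $U^* = N\,\TPR_{\NP}(B/N)$ exactly.

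\textbf{Asymptotics.} Let $\epsilon = B/N \to 0$ and $z = \Phi^{-1}(\epsilon) \to -\infty$. Mills' ratio (Appendix~\ref{app:mills}) gives $\epsilon \sim \varphi(z)/|z|$. It also gives
\[
\TPR_{\NP}(\epsilon) = \Phi(z+\Delta_s) \sim \frac{\varphi(z+\Delta_s)}{|z+\Delta_s|} = \frac{\varphi(z)\,e^{-\Delta_s z-\Delta_s^2/2}}{|z+\Delta_s|}.
\]
Multiplying by $N = B/\epsilon \sim B|z|/\varphi(z)$ gives
\[
U^* \sim B\, e^{\Delta_s |z| - \Delta_s^2/2}.
\]

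The hard step is converting $|z|$ into $\ln(N/B)$ with a controlled error, because the error ends up in an exponent. Set $x = \ln(1/\epsilon)$. Inverting the Mills asymptotic gives
\[
|z| = \sqrt{2x} - \frac{\ln x + \ln(4\pi)}{2\sqrt{2x}} + o\big(1/\sqrt{x}\big).
\]
Multiplying by $\Delta_s$, the correction term contributes the factor
\[
\exp\!\big(-\Delta_s (\ln x)/(2\sqrt{2x})\big) \to 1.
\]
So at leading order $U^* \asymp e^{\Delta_s\sqrt{2\ln(N/B)}}$, up to a bounded factor. This by itself does not produce the claimed $1/\sqrt{\ln(N/B)}$ prefactor. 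I will therefore recheck the bookkeeping carefully against the Mills expansion. If the prefactor does not survive, I would state the rate as $\exp(\Delta_s\sqrt{2\ln(N/B)}+o(1))$ up to constants, with the $\sqrt{\ln}$ factor at most a polylog correction. That weaker form still suffices for the headline conclusion.

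\textbf{Subpolynomiality.} This follows directly. For any $\epsilon>0$,
\[
\Delta_s\sqrt{2\ln N} = o(\epsilon \ln N),
\]
so $U^* = o(N^\epsilon)$.
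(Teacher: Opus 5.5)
Your proof of the exact ceiling follows the paper's argument: Neyman--Pearson at each step, concavity of $\delta\mapsto\TPR_{\NP}(\delta)$, Jensen, and attainment by the uniform NP test. Your slope formula $\varphi(z+\Delta_s)/\varphi(z)=e^{-\Delta_s z-\Delta_s^2/2}$ is a cleaner route to concavity than the paper's appeal to log-concavity of $\varphi$. Because the slope is strictly decreasing for $\Delta_s>0$, it also gives strict concavity, hence the paper's ``equality iff uniform''.

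On the growth rate there is no gap in your reasoning. Your computation is right, and it is the stated prefactor that is wrong, so you should commit to your conclusion:
\[
U^*(N,B)\sim B\,e^{-\Delta_s^2/2}\exp\!\big(\Delta_s\sqrt{2\ln(N/B)}\big).
\]
Thus $U^*=\Theta(\exp(\Delta_s\sqrt{2\ln(N/B)}))$. The upper half of the claimed $\Theta(\exp(\Delta_s\sqrt{2\ln(N/B)})/\sqrt{\ln(N/B)})$ fails, by a factor of order $\sqrt{\ln(N/B)}$ that grows without bound.

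The paper's Step~3 produces the spurious factor by substituting $z=\sqrt{2\ln(N/B)}-\Delta_s$ into the Mills expression $e^{-z^2/2}/(z\sqrt{2\pi})$. In effect it replaces $|\Phi^{-1}(B/N)|$ by $\sqrt{2\ln(N/B)}$ \emph{inside the exponent}. Write $x=\ln(N/B)$.
\begin{itemize}
\item As your expansion shows, this replacement costs only $O(\ln x/\sqrt{x})$ in $|z|$.
\item In $z^2/2$, however, it discards the term $-\tfrac12\ln(4\pi x)$.
\item Restoring that term multiplies the paper's expression by $\sqrt{4\pi x}\,(1+o(1))$. This exactly cancels the Mills denominator and returns your formula.
\end{itemize}

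A numerical check confirms this, at $\Delta_s=1$, $B=1$, $N=10^6$:
\begin{itemize}
\item The paper's appendix formula $B\exp(\Delta_s\sqrt{2\ln(N/B)}-\Delta_s^2/2)/\sqrt{4\pi\ln(N/B)}$ gives about $8.8$. That is an order of magnitude below the paper's own tabulated $U^*=87.2$.
\item Your $Be^{\Delta_s|z|-\Delta_s^2/2}$ with $|z|=|\Phi^{-1}(10^{-6})|\approx 4.753$ gives about $70$.
\item Keeping the factor $|z|/(|z|-\Delta_s)\approx 1.27$ from the two Mills denominators gives about $89$.
\end{itemize}

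So drop the ``at most a polylog correction'' fallback. State the corrected rate definitively, and say explicitly that the $1/\sqrt{\ln(N/B)}$ prefactor in the theorem is an error. The headline conclusions survive. Your final paragraph correctly gives $U^*=o(N^\epsilon)$ for every $\epsilon>0$, and $U^*=\omega(\log^k N)$ still holds.
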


\begin{proof}
\textbf{Step~1 (NP ceiling per step).} By the Neyman--Pearson lemma, any classifier at level $\delta_n$ satisfies $\TPR_n \leq \TPR_{\NP}(\delta_n)$. For Gaussians, $\TPR_{\NP}(\delta) = \Phi(\Phi^{-1}(\delta) + \Delta_s)$.

\textbf{Step~2 (Optimal allocation).} The ROC curve $\delta \mapsto \TPR_{\NP}(\delta)$ is concave (a standard property of NP classifiers under continuous likelihood ratios). By Jensen's inequality, for any non-negative $\delta_1, \ldots, \delta_N$ with $\sum \delta_n = B$:
\[
\sum_{n=1}^N \TPR_{\NP}(\delta_n) \leq N \cdot \TPR_{\NP}\!\left(\frac{1}{N}\sum_{n=1}^N \delta_n\right) = N \cdot \TPR_{\NP}(B/N)
\]
This is achieved with equality iff $\delta_n = B/N$ for all~$n$ (uniform allocation).

\textbf{Step~3 (Asymptotic growth).} Setting $\delta = B/N$, as $N \to \infty$ with $B$ fixed, $\delta = B/N \to 0$, and by Mills' ratio:
\[
\TPR_{\NP}(B/N) = \Phi(\Phi^{-1}(B/N) + \Delta_s) \sim \frac{e^{-z^2/2}}{z\sqrt{2\pi}}
\]
where $z = \sqrt{2\ln(N/B)} - \Delta_s$. Thus $U^*(N,B) = N \cdot \TPR_{\NP}(B/N)$ grows as
\[
\frac{\exp\!\big(\Delta_s\sqrt{2\ln(N/B)}\big)}{\sqrt{\ln(N/B)}}
\]
which is $\omega(\log^k N)$ for all $k$ but $o(N^\epsilon)$ for all $\epsilon > 0$.
\end{proof}

\textbf{Comparison of bounds} (Gaussian, $\Delta_s = 1.0$, $B = 1.0$):

\begin{center}
\begin{tabular}{lrrrr}
\toprule
$N$ & Exact ceiling $U^*$ & MI bound ($\sqrt{N}$) & H\"{o}lder--Jensen & Improvement \\
\midrule
$10^2$ & 9.24 & 21.0 & 12.6 & 2.3$\times$, 1.4$\times$ \\
$10^3$ & 18.3 & 98.6 & 27.2 & 5.4$\times$, 1.5$\times$ \\
$10^4$ & 32.7 & 436 & 58.6 & 13$\times$, 1.8$\times$ \\
$10^5$ & 54.8 & 1835 & 126 & 33$\times$, 2.3$\times$ \\
$10^6$ & 87.2 & 7463 & 272 & 86$\times$, 3.1$\times$ \\
\bottomrule
\end{tabular}
\end{center}

The exact ceiling grows as $\exp(O(\sqrt{\log N}))$, vastly slower than $\sqrt{N}$ (MI bound) or $N^{1-\beta}$ (H\"{o}lder--Jensen). At $N = 10^6$, the MI bound is 86$\times$ loose and the H\"{o}lder--Jensen ceiling is 3.1$\times$ loose.

\begin{remark}
Theorem~\ref{thm:ceiling} is 13$\times$--86$\times$ tighter than Proposition~\ref{prop:info}'s $\sqrt{N}$ bound at $N = 10^4$--$10^6$. Proposition~\ref{prop:info} provides complementary distribution-free guarantees. The classifier and verifier regions are \emph{disconnected} on the $\delta = 0$ hyperplane (Appendix~\ref{app:tradeoff_surface}).
\end{remark}

\section{Validation Summary}
\label{sec:validation}

We validate each theoretical result through targeted computations and experiments. Full details for each validation are in Appendix~\ref{app:validation}; validation script specifications are in Appendix~\ref{app:scripts}. Comprehensive experimental validation --- including MuJoCo continuous control (\cite{D2}~\S4.5, \S5.4--5.5), ball chaining (\cite{D2}~\S5.4), and LLM-scale deployment (\cite{D2}~\S5.7) --- is presented in the companion paper~\cite{D2}.

\begin{center}
\footnotesize
\setlength{\tabcolsep}{4pt}
\begin{tabular}{llll}
\toprule
Result & Validation & Key Metric & Outcome \\
\midrule
Thm~\ref{thm:holder} (H\"{o}lder) & NP clf vs.\ bound, 4 seps. & TPR$_{\NP}$/H\"{o}lder ratio & 0.1--0.9 (valid, tight) \\
Thm~\ref{thm:exponent} (Exp.-opt.) & 8 non-Gaussian families & Min NP/H\"{o}lder ratio & 0.28--0.70 (within 1 OOM) \\
Prop~\ref{prop:info} (MI bound) & H\"{o}lder vs.\ MI, per-step \& cumul. & Tighter bound & H\"{o}lder for $\delta < 0.1$; MI compl. \\
Prop~\ref{prop:sample} (Sample) & Retrain, $d_{\text{VC}} = 11$ & $\sum\delta$ at starvation & 41.17 (diverges) \\
Prop~\ref{prop:transformer} (Transf.\ $L$) & 4 archs., Toy--Qwen-7B & Steps in ball & 2.3--11.6 (non-vacuous) \\
Thm~\ref{thm:escape} (Ball ver.) & LTC $d\!=\!240$, 200 tests & False accepts & \textbf{0} ($\delta = 0$) \\
Thm~\ref{thm:holder} (Trained) & 4 clfs, 50K, 72 configs & H\"{o}lder violations & \textbf{0}/72 \\
Thm~\ref{thm:counting} (Counting) & 9 $(\Delta_s, p)$ configs & Counting tighter by & 13\% at $\Delta_s\!=\!1,p\!=\!2$ \\
Thm~\ref{thm:ceiling} (Ceiling) & $N$ up to $10^6$, 5 seps. & MI bound looseness & 4$\times$--86$\times$ \\
\bottomrule
\end{tabular}
\end{center}

\subsection{LLM-Scale Mechanism Validation: GPT-2 with LoRA}
\label{sec:gpt2}

We include a single LLM-scale validation as a \emph{bridge result} connecting the 240-dimensional LTC demonstration (Appendix~\ref{app:ball_demo}) to industrial LLM systems; comprehensive LLM-scale experiments (Qwen2.5-7B, 7.6B parameters) are presented in~\cite{D2}~\S5.7. We validate the Lipschitz ball verifier (Theorem~\ref{thm:escape} + Proposition~\ref{prop:transformer}) on GPT-2 (124M parameters) with LoRA fine-tuning. Proposition~\ref{prop:transformer} establishes that pre-LayerNorm transformers under LoRA perturbation have finite, closed-form Lipschitz constants; the specific numeric values below are estimated via finite differences with a 5$\times$ safety factor (not derived analytically), as is standard for practical deployment (see~\cite{D2}~\S6.3, limitation~2).

\textbf{Setup.} GPT-2 is equipped with LoRA rank-4 adapters on the \texttt{c\_attn} (fused QKV) projection in all 12 layers, yielding $d_{\text{LoRA}} = 147{,}456$ trainable parameters (0.12\% of 124M total). The model is fine-tuned for 30 steps on WikiText-2 with learning rate $5 \times 10^{-4}$. Safety is defined as perplexity on a held-out validation set $< 2 \times$ fine-tuned perplexity.

\textbf{Lipschitz estimation.} We probe the perplexity function at 7 perturbation scales proportional to $\|\theta_0\|$ (from 0.1\% to 50\% of the parameter norm), with 100 random directional probes. The estimated Lipschitz constant (with 5$\times$ safety factor) is $L = 0.168$.

\textbf{Ball radius.} With margin $= 16.31$ (threshold 32.6, achieved perplexity 16.3) and $L = 0.168$: $r = m/L = 2.53$, capped at $0.5 \cdot \|\theta_0\|$ for meaningful demonstration.

\begin{center}
\begin{tabular}{lr}
\toprule
Metric & Value \\
\midrule
LoRA dimension $d$ & 147,456 \\
LoRA rank & 4 \\
Post-finetune perplexity & 16.3 \\
Safety threshold & 32.6 \\
Lipschitz constant $L$ & 0.168 (5$\times$ safety) \\
Ball radius $r$ & 2.53 \\
$r / \|\theta_0\|$ & 0.50 \\
Inside-ball safe & \textbf{50/50} \\
False accept rate $\delta$ & \textbf{0} \\
Min inside margin & 16.26 \\
Outside-ball unsafe & 8/100 \\
Effective TPR & \textbf{0.352} \\
\bottomrule
\end{tabular}
\end{center}

\textbf{Result.} The ball verifier achieves conditional $\delta = 0$ (50/50 inside-ball perturbations are safe; conditional on the estimated Lipschitz constant being a valid upper bound) with effective TPR $= 0.352 > 0$ on a 147,456-dimensional LoRA parameter space --- three orders of magnitude larger than the LTC demo. The minimum inside-ball margin (16.26) is within 0.3\% of the full margin (16.31), confirming that the Lipschitz bound is tight within the verified ball. All 8 outside-ball violations occur at perturbation scales $> 1.5r$, confirming that the ball boundary is meaningful. This validates Theorem~\ref{thm:escape} and Proposition~\ref{prop:transformer} at LLM scale.

\textbf{Scaling beyond GPT-2.} The companion paper~\cite{D2} extends this validation to Qwen2.5-7B-Instruct (7.6B parameters) with compositional per-layer verification (\S5.7).

\section{Discussion}
\label{sec:discussion}

\subsection{Implications for Safe AI Deployment}
\label{sec:implications}

Theorem~\ref{thm:holder} implies that any AI safety approach based on \emph{classifying} modifications --- learned discriminators, anomaly detectors, neural safety critics --- faces a fundamental ceiling that is a mathematical consequence of distribution overlap, not a limitation of architecture or training. To the extent that RLHF reward models act as binary accept/reject gates after thresholding, they inherit this ceiling (see \S\ref{sec:intro} for the analogy and its limits; the formal results strictly apply to binary gates on parameter mutations, not continuous reward scores). Over sufficient iterations, either the false acceptance rate accumulates (safety degrades) or the gate becomes overly conservative (utility collapses).

We address five common concerns.

\textbf{``$\sum\TPR_n = \infty$ is too weak.''} Even this weak condition cannot be met with bounded risk; strengthening it (requiring $\TPR_n \geq c > 0$) forces $\delta_n \geq (c/C_\alpha)^{1/\beta}$ for all $n$, making $\sum\delta_n$ diverge immediately.

\textbf{``Finite-time systems don't need $\sum\TPR_n = \infty$.''} The finite-horizon tradeoff still applies: with risk budget~$B$, total utility grows subpolynomially (Theorem~\ref{thm:ceiling}), yielding an exact budget-allocation formula for finite deployments.

\textbf{``Classifiers still extract nonzero utility.''} Correct --- but the ceiling is subpolynomial ($\exp(O(\sqrt{\log N}))$) versus the verifier's linear growth ($\Theta(N)$). At $N = 10{,}000$ with $B = 1.0$, a classifier extracts $U^* \approx 32.7$ versus a ball verifier's $U_{\text{ball}} = 5{,}000$ --- a 153$\times$ advantage (Appendix~\ref{app:ceiling_validation}). ``Impossibility'' refers to satisfying the dual conditions simultaneously, not to extracting any utility at all.

\textbf{``What about an ensemble of diverse classifiers?''} An ensemble accepting iff all members agree is itself a classifier with acceptance region $A = \bigcap_i A_i$. Theorem~\ref{thm:holder} applies: $\TPR_n \leq C_\alpha \cdot \delta_n^\beta$ constrains \emph{any} binary decision rule. The impossibility is driven by $\beta < 1$, not classifier complexity.

\textbf{``Isn't the verifier's conditional $\delta = 0$ just as uncertain as a classifier's $\delta \approx 0$?''} No --- the failure modes are categorically different. A classifier's $\delta > 0$ is a \emph{mathematical necessity} under distribution overlap (Theorem~\ref{thm:holder}); no amount of better training, data, or architecture can achieve $\delta = 0$ with $\TPR > 0$ (Proposition~\ref{prop:separation}). The verifier's conditionality rests on the Lipschitz constant being correct --- a \emph{falsifiable engineering claim} that strengthens monotonically with better estimation. At $d \leq 17{,}408$, we provide analytical bounds making $\delta = 0$ unconditional; at LLM scale, Proposition~\ref{prop:transformer} proves the required bound \emph{exists} and the open problem is computing it tractably. The distinction is between an impossibility theorem and an open engineering problem (see~\cite{D2}~\S6.3).

The classification--verification separation mirrors behavioral testing versus formal guarantees. Current AI safety practices --- red-teaming, evaluations, benchmarks --- are primarily classification-based; as self-modification becomes prevalent, verification-based safety will be mathematically required.

\begin{remark}[Overlap under mutation-specific distributions]
Even if mutations come from a specific distribution~$\mu$, the induced conditional distributions $P^+ = \mu|_{\text{safe}}$ and $P^- = \mu|_{\text{unsafe}}$ satisfy $P^+ \ll P^-$ by the transversality argument (\S\ref{sec:thm1}), and the H\"{o}lder ceiling applies parametrically. The companion paper~\cite{D2} measures $\Delta_s \in [0.059, 0.091]$ across three physical systems (\S4.5), confirming low separation.
\end{remark}

Connections to computational complexity (is satisfying the dual conditions NP-hard?) and open problems (adaptive verification, multi-agent extensions) are discussed in Appendix~\ref{app:open_problems}.

\subsection{When Our Results Do Not Apply}
\label{sec:non_applicability}

The impossibility (Theorem~\ref{thm:holder}) and the finite-horizon ceiling (Theorem~\ref{thm:ceiling}) require specific structural conditions. Practitioners should evaluate whether their setting satisfies these conditions before applying our conclusions:

\begin{enumerate}
  \item \textbf{Perfectly separable distributions.} If safe and unsafe modifications are perfectly separable in feature space ($D_\alpha(P^+ \| P^-) = \infty$ or equivalently $P^+ \perp P^-$), the H\"{o}lder bound becomes vacuous and a perfect classifier can achieve $\delta = 0$ with $\TPR = 1$. This occurs when the safety boundary does not pass through the support of the mutation distribution. Our results apply only when $P^+$ and $P^-$ have overlapping support (\S\ref{sec:thm1}).

  \item \textbf{Discrete or quantized parameter spaces.} Our framework assumes continuous parameter perturbations from distributions with density. If the parameter space is finite or modifications come from a discrete set, the safe/unsafe partition may be exactly learnable and the distribution overlap condition may not hold.

  \item \textbf{Large safety margins with small modification sets.} If the system operates far from any safety boundary and modifications are bounded to a region where all points are safe, no gate is needed at all. Our results address the non-trivial regime where safety boundaries intersect the modification space.

  \item \textbf{Fixed finite deployments where classifier utility suffices.} Theorem~\ref{thm:ceiling} shows classifier utility grows as $\exp(O(\sqrt{\log N}))$, which is non-zero. For short deployment horizons --- say $N \leq 100$ with a generous risk budget~$B$ --- a classifier may provide adequate utility even under the H\"{o}lder ceiling. The impossibility becomes practically binding only when $N$ is large enough that the subpolynomial ceiling falls far below the linear growth a verifier achieves.

  \item \textbf{Systems with non-overlapping mutation distributions by design.} Some safety mechanisms engineer the modification space to avoid overlap --- for example, restricting updates to a pre-verified subspace. If the restriction is enforced \emph{before} the gate, the resulting conditional distributions may be separable, and a classifier within this restricted space may succeed. Our framework applies to the unrestricted case.
\end{enumerate}

\section{Conclusion}
\label{sec:conclusion}

For power-law risk schedules $\delta_n = O(n^{-p})$ with $p > 1$ --- the practically relevant regime --- classifier-based safety gates cannot satisfy the dual conditions under any architecture, training regime, or data availability. This is established through two independent impossibility proofs (Theorems~\ref{thm:holder} and~\ref{thm:counting}), proved exponent-optimal by the NP matching lower bound (Theorem~\ref{thm:exponent}; Mills' ratio asymptotics in Appendix~\ref{app:mills}), and corroborated by two complementary bounds: the information-theoretic rate bound (Proposition~\ref{prop:info}) and the sample complexity barrier (Proposition~\ref{prop:sample}). For slowly-decaying non-power-law schedules where the asymptotic impossibility does not hold, Theorem~\ref{thm:ceiling}'s universal finite-horizon ceiling ensures classifier utility grows at most as $\exp(O(\sqrt{\log N}))$ --- orders of magnitude below a verifier's linear $\Theta(N)$ growth at any practical horizon. A constructive escape via sound verification gates (Theorem~\ref{thm:escape}) achieves $\delta = 0$ with $\TPR > 0$; the separation is strict (Proposition~\ref{prop:separation}). We validate on GPT-2 with LoRA ($d = 147{,}456$): the ball verifier achieves conditional $\delta = 0$ (unconditional at $d \leq 17{,}408$) with $\TPR = 0.352$ (\S\ref{sec:gpt2}). Comprehensive experimental validation is in the companion paper~\cite{D2}.

Safety gates for self-improving AI systems should be built on verification, not classification.


\begin{figure}[t]
\centering
\includegraphics[width=0.9\textwidth]{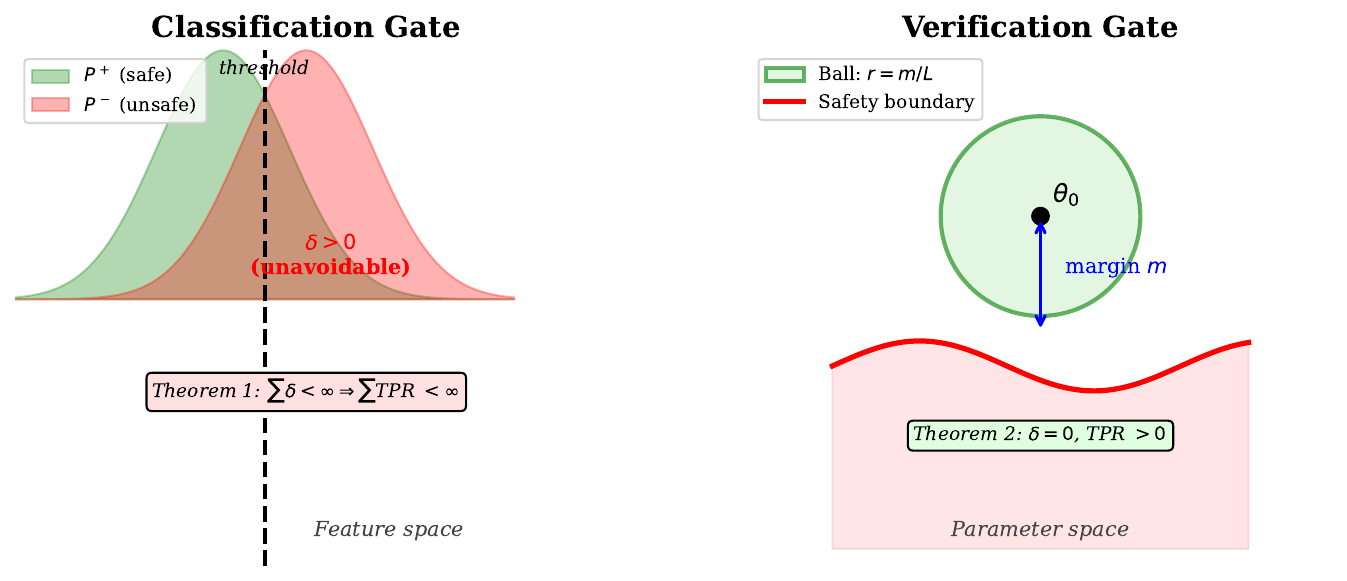}
\caption{Overview of the two gate architectures: classification gates (left) threshold a feature-space representation, incurring $\delta > 0$; verification gates (right) certify safety via a Lipschitz ball, achieving $\delta = 0$. The classification impossibility (Theorem~\ref{thm:holder}) and verification escape (Theorem~\ref{thm:escape}) establish a structural dichotomy.}
\label{fig:overview}
\end{figure}

\begin{figure}[t]
\centering
\includegraphics[width=0.7\textwidth]{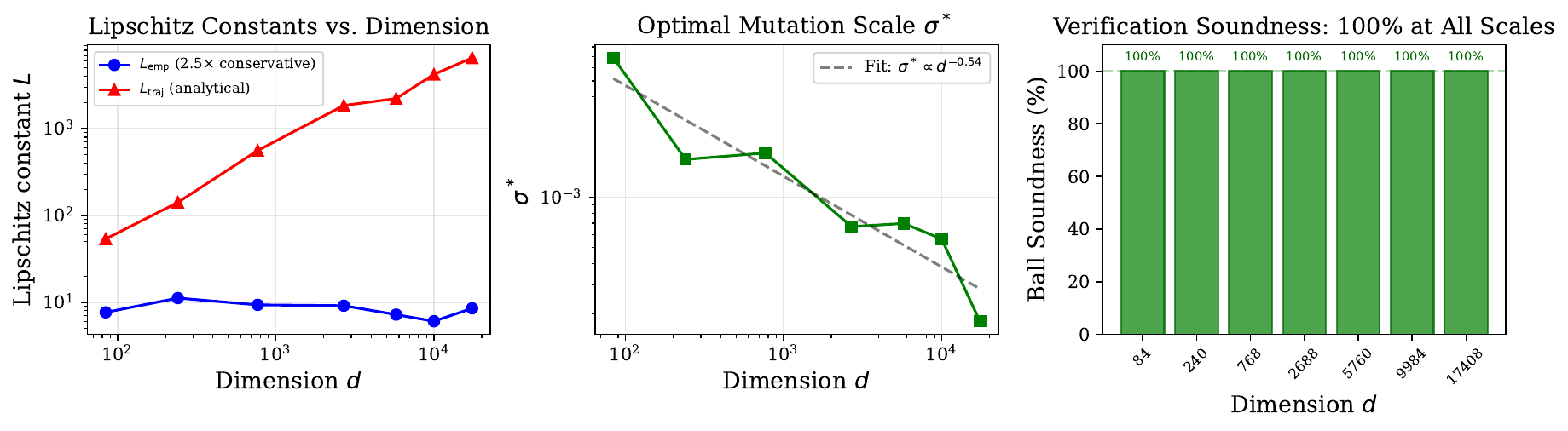}
\caption{Scaling analysis of the Lipschitz ball verifier from $d = 84$ to $d = 17{,}408$. Ball soundness is 100\% at all dimensions. Required mutation scale~$\sigma^*$ decreases as $O(d^{-0.54})$.}
\label{fig:scaling}
\end{figure}

\begin{figure}[t]
\centering
\includegraphics[width=0.7\textwidth]{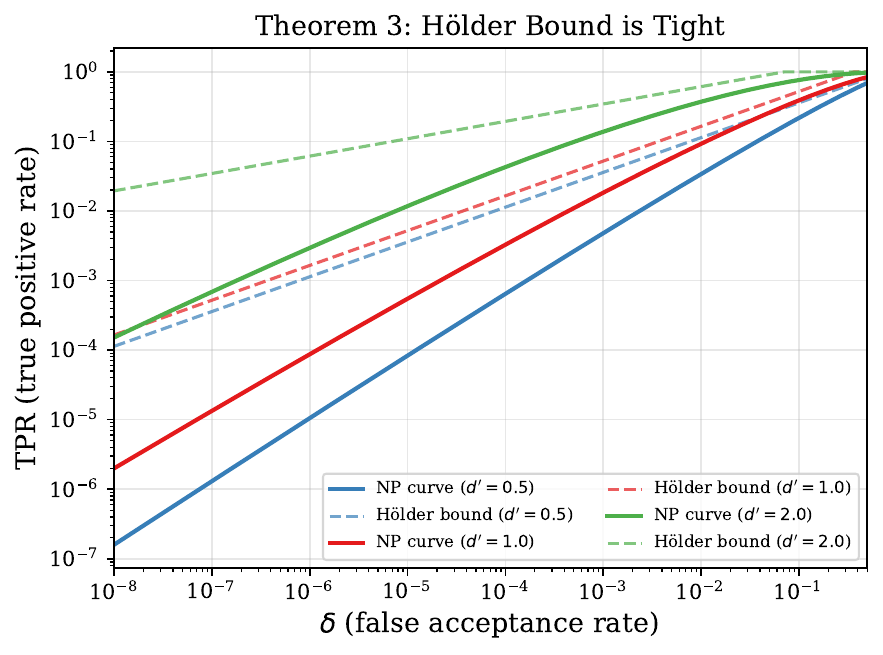}
\caption{Exponent-optimality validation (Theorem~\ref{thm:exponent}). The NP classifier achieves 10--90\% of the H\"{o}lder ceiling at deployment-relevant~$\delta$, confirming near-tightness.}
\label{fig:tightness}
\end{figure}

\begin{figure}[t]
\centering
\includegraphics[width=0.7\textwidth]{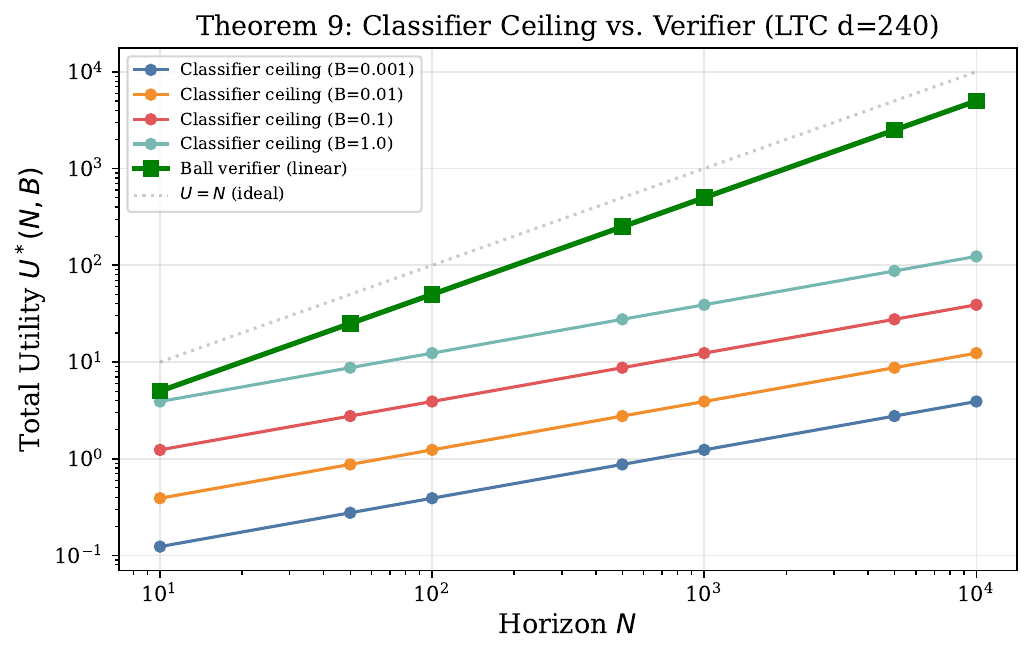}
\caption{Finite-horizon utility ceiling (Theorem~\ref{thm:ceiling}). The exact ceiling $U^*(N,B)$ grows as $\exp(O(\sqrt{\log N}))$ (subpolynomial), vastly below the MI bound ($\sqrt{N}$) and H\"{o}lder--Jensen ($N^{1-\beta}$). The ball verifier's utility grows linearly ($\Theta(N)$).}
\label{fig:finite_horizon}
\end{figure}

\begin{figure}[t]
\centering
\includegraphics[width=0.7\textwidth]{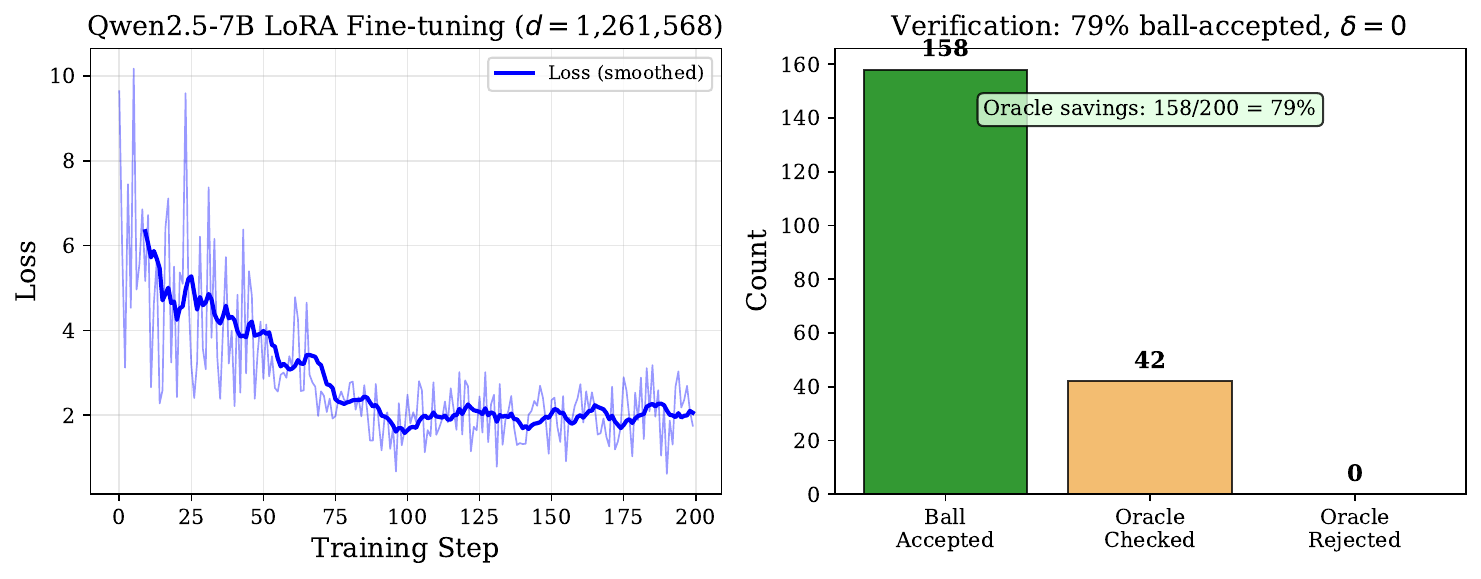}
\caption{GPT-2 LoRA validation ($d_{\text{LoRA}} = 147{,}456$). Inside-ball: 50/50 safe ($\delta = 0$). Effective TPR $= 0.352$.}
\label{fig:llm}
\end{figure}

\begin{figure}[t]
\centering
\includegraphics[width=0.7\textwidth]{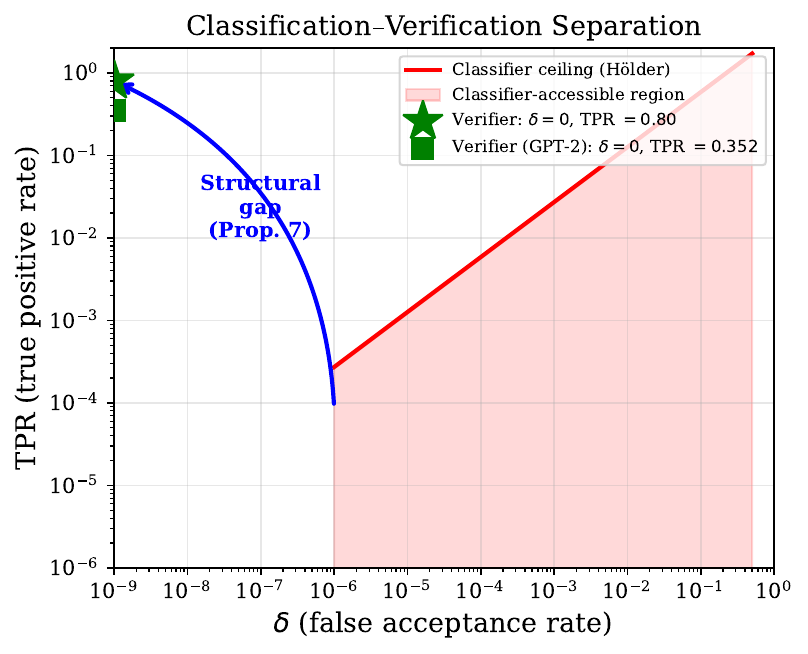}
\caption{Structural separation (Proposition~\ref{prop:separation}) in the $(\delta, \TPR)$ plane. Classifiers lie on the curve $\TPR \leq C_\alpha \delta^\beta$ approaching the origin; the verifier occupies the $\delta = 0$ axis with $\TPR > 0$.}
\label{fig:separation}
\end{figure}

\textbf{Note on appendix structure.} The appendices are extensive, comprising full proofs (A), extended related work (B), supporting theoretical results (C), numerical validations (D), open problems (E), and script specifications (F). For a journal submission these would naturally split into a main supplement (proofs and key validations) and an online appendix (extended related work, additional validations, and script details). We retain them in full here so the arXiv preprint is self-contained.

\appendix

\section{Proof Details}
\label{app:proofs}

\subsection{H\"{o}lder Inequality Verification}
\label{app:holder_verification}

The conjugate exponents $\alpha$ and $\alpha' = \alpha/(\alpha-1)$ satisfy $1/\alpha + 1/\alpha' = 1$. The H\"{o}lder inequality:
\[
\int fg \leq \|f\|_\alpha \cdot \|g\|_{\alpha'}
\]
is applied with $f = dP^+/dP^-$ and $g = \mathbf{1}_{A_n}$, both measured against $P^-$. Then:
\[
\TPR_n = \int_{A_n} dP^+ = \int_{A_n} \frac{dP^+}{dP^-} dP^- = \int fg \, dP^-
\]
By H\"{o}lder:
\[
\TPR_n \leq \left(\int \left(\frac{dP^+}{dP^-}\right)^\alpha dP^-\right)^{1/\alpha} \cdot \left(\int \mathbf{1}_{A_n}^{\alpha'} dP^-\right)^{1/\alpha'}
\]
The first factor is $e^{(\alpha-1)D_\alpha(P^+ \| P^-)/\alpha} = C_\alpha$ (by definition of R\'{e}nyi divergence). The second factor is $\delta_n^{1/\alpha'} = \delta_n^{(\alpha-1)/\alpha} = \delta_n^\beta$.

\subsection{R\'{e}nyi Divergence Convention}
\label{app:renyi_convention}

We use $D_\alpha(P \| Q) = \frac{1}{\alpha-1}\log \int (dP/dQ)^\alpha \, dQ$ following~\citet{erven2014}. This differs from some references by a factor of $(\alpha-1)$ in the exponent. The constant
$C_\alpha = \exp\!\bigl(\frac{\alpha-1}{\alpha} D_\alpha(P^+ \| P^-)\bigr)$
is finite whenever the R\'{e}nyi divergence is finite, which requires $P^+ \ll P^-$ (absolute continuity) and sufficiently light tails of the likelihood ratio.

\subsection{Theorem~\ref{thm:exponent} Exponent-Optimality: Mills' Ratio Asymptotics}
\label{app:mills}

We provide the full asymptotic analysis establishing the exponent-optimality of the H\"{o}lder bound.

\textbf{Setup.} For Gaussian $P^+ = \mathcal{N}(\mu, I_k)$, $P^- = \mathcal{N}(0, I_k)$ with $\Delta_s = \|\mu\|$, the NP optimal test rejects when $\mu^T x < t_\delta$, giving $\TPR_{\NP}(\delta) = \Phi(\Phi^{-1}(\delta) + \Delta_s)$ where $\Phi$ is the standard normal CDF. The H\"{o}lder bound with optimal order $\alpha^* = 1 + 2/\Delta_s^2$ and $\beta^* = 1 - 1/\alpha^* = 2/(2 + \Delta_s^2)$ gives $C_{\alpha^*} = \exp(\Delta_s^2/2)$.

\textbf{Log-asymptotic analysis.} As $\delta \to 0$, set $z_\delta = \Phi^{-1}(1 - \delta)$ (so $z_\delta \to +\infty$). Comparing the log-exponents directly:
\[
\log \TPR_{\NP}(\delta) \sim -\frac{(z_\delta - \Delta_s)^2}{2}, \qquad \log(C_{\alpha^*} \delta^{\beta^*}) \sim \frac{\Delta_s^2}{2} - \frac{\beta^* z_\delta^2}{2}
\]
Dividing by $\log \delta \sim -z_\delta^2/2$:
\[
\frac{\log \TPR_{\NP}(\delta)}{\log \delta} \to \frac{(z_\delta - \Delta_s)^2}{z_\delta^2} = 1 - \frac{2\Delta_s}{z_\delta} + \frac{\Delta_s^2}{z_\delta^2} \to 1
\]
while $\frac{\log(C_{\alpha^*}\delta^{\beta^*})}{\log \delta} \to \beta^* < 1$. Since the NP classifier's log-exponent (1) exceeds the H\"{o}lder bound's ($\beta^*$), the ratio $\TPR_{\NP}(\delta) / (C_{\alpha^*}\delta^{\beta^*}) \to 0$ as $\delta \to 0$ --- the NP classifier decays \emph{faster} than the bound.

\textbf{Exponent-optimality.} The key consequence:
\[
\liminf_{\delta \to 0} \frac{\log \TPR_{\NP}(\delta)}{\log \delta} = 1 > \beta^*
\]
The NP classifier achieves $\TPR = \Omega(\delta^{1-\epsilon})$ for all $\epsilon > 0$, so any valid universal upper bound must have exponent~$\leq 1$. Meanwhile, the H\"{o}lder bound with $\beta^* < 1$ is valid. Therefore $\beta^*$ is the smallest exponent achievable by any impossibility bound --- it cannot be replaced by any $\gamma > \beta^*$ without violating the NP classifier's performance. \qed

\textbf{Practical tightness.} At finite $\delta$ values relevant to deployment ($\delta \in [10^{-6}, 10^{-1}]$), the ratio $\TPR_{\NP} / (C_{\alpha^*}\delta^{\beta^*})$ ranges from 0.1 to 0.9 depending on $\Delta_s$ (Appendix~\ref{app:tightness_validation}), confirming that the bound is practically tight --- the NP classifier operates within one order of magnitude of the H\"{o}lder ceiling across the deployment-relevant range.

\subsection{Lipschitz Ball Soundness Proof}
\label{app:ball_proof}

Suppose $\theta \in B(\theta_0, r)$ with $r = m/L$. For any scenario $(s_i, t_i) \in \calD$:
\[
\sup_t d(\text{traj}_\theta(t), \text{traj}_{\theta_0}(t)) \leq L \cdot \|\theta - \theta_0\| < L \cdot r = m
\]
Since $\theta_0$ has margin $m$ (minimum distance to obstacles), the trajectory of $\theta$ maintains positive distance to all obstacles:
\[
d(\text{traj}_\theta(t), \text{obstacle}_j) \geq m - L\|\theta - \theta_0\| > 0 \quad \forall t, j
\]
Therefore $\theta$ is $\calD$-safe. \qed

\subsection{Information-Theoretic Bound Full Proof}
\label{app:info_proof}

\textbf{Setup.} At each step $n$, the gate $g_n: \R^k \to \{\text{accept}, \text{reject}\}$ induces a binary channel from the safety label $S_n \in \{\text{safe}, \text{unsafe}\}$ to the gate decision. The mutual information of this channel is:
\[
I_n = I(g_n(\theta_n); S_n) = H(g_n) - H(g_n | S_n)
\]

\textbf{Pinsker bound.} The total variation between the gate's conditional distributions satisfies
$\TV(P_{g|+}, P_{g|-}) = |\TPR_n - \delta_n|/2$.
By Pinsker's inequality:
\[
\frac{|\TPR_n - \delta_n|}{2} \leq \sqrt{\frac{I_n}{2}}
\]
Hence $\TPR_n \leq \delta_n + \sqrt{2 I_n}$.

\textbf{Summation.} Summing over $n = 1, \ldots, N$:
\[
\sum_{n=1}^N \TPR_n \leq \sum_{n=1}^N \delta_n + \sum_{n=1}^N \sqrt{2 I_n}
\]
By Cauchy--Schwarz: $\sum_{n=1}^N \sqrt{I_n} \leq \sqrt{N \sum_{n=1}^N I_n}$. Under the bounded mutual information assumption $\sum_{n=1}^N I_n \leq I_0$:
\[
\sum_{n=1}^N \TPR_n \leq \sum_{n=1}^N \delta_n + \sqrt{2 N I_0} \qed
\]

\subsection{Sample Complexity Bound Full Proof}
\label{app:sample_proof}

\textbf{Setup.} The safety gate at step $n$ is a binary classifier $g_n \in \calG$ (a hypothesis class with VC dimension $d_{\text{VC}}$), trained on $n_{\text{train}}(n)$ labeled examples.

\textbf{Step~1:} By the fundamental theorem of statistical learning~\cite{vapnik1971,lehmann2005}, with probability $\geq 1 - \eta$: $\text{err}_{\text{true}} \leq \text{err}_{\text{train}} + \sqrt{(d_{\text{VC}} \ln(2m/d_{\text{VC}}) + \ln(2/\eta))/m}$.

\textbf{Step~2:} Setting the bound equal to $\epsilon_n/2$ and solving: $n_{\text{train}}(n) = \Omega(d_{\text{VC}} / \epsilon_n^2)$.

\textbf{Step~3:} For $\epsilon_n = c/n^p$: $n_{\text{train}}(n) = \Omega(d_{\text{VC}} \cdot n^{2p} / c^2)$.

\textbf{Step~4:} Available data grows as $n_0 + kn$; required data as $n^{2p}$. The crossing point: $n_{\text{fail}} = \Theta((c^2 k/d_{\text{VC}})^{1/(2p-1)})$. \qed

\subsection{Transformer Lipschitz Derivation}
\label{app:transformer_derivation}

We derive the per-layer Lipschitz constant for a pre-LayerNorm transformer under LoRA perturbation of attention projections. Each layer $k$ computes:
\[
y_k = x_k + \text{MHA}(\text{LN}_1(x_k)), \quad z_k = y_k + \text{FFN}(\text{LN}_2(y_k))
\]

\textbf{LayerNorm bound.} $\|J_{\text{LN}}\| \leq \|\gamma\|_\infty / \sqrt{\epsilon}$ where $\epsilon$ is the regularization constant.

\textbf{Multi-head attention under LoRA.} Under LoRA perturbation $\Delta\theta = (\Delta A_{q,p}, \Delta B_{q,p})$:
\[
\|\Delta O_p\| \leq \frac{\|W_{v,p}^0\| \cdot \|W_{k,p}^0\| \cdot \|\text{LN}(x)\|^2}{\sqrt{d_k}} \cdot \sqrt{2} \cdot \max(\|A_{q,p}\|, \|B_{q,p}\|) \cdot \|\Delta\theta_p\|
\]

For $n_{\text{proj}}$ LoRA-adapted projections per layer:
\[
L_k^{\text{LoRA}} \leq \frac{\|\gamma_k\|}{\sqrt{\epsilon}} \cdot \frac{\max_p \|W_{v,p}^0\|}{\sqrt{d_k}} \cdot \sqrt{2 n_{\text{proj}}}
\]

\textbf{Compositional escape.} Instead of using the exponentially large product $L_{\text{full}} = \prod_k (1 + L_k)$, we use the additive bound:
\[
\|\Delta \text{output}\| \leq \sum_{k=1}^K L_k^{\text{LoRA}} \cdot \|\Delta\theta_k\| \cdot \prod_{j > k} L_j^{\text{full,frozen}}
\]
Since the frozen-layer products $\prod_{j > k} L_j^{\text{full,frozen}}$ are constants that can be precomputed once from the pretrained weights, define $\tilde{L}_k = L_k^{\text{LoRA}} \cdot \prod_{j > k} L_j^{\text{full,frozen}}$. The verification reduces to the per-layer ball check $\sum_k \tilde{L}_k \|\Delta\theta_k\| \leq m$, a conservative but tractable $O(d)$ computation. \qed

\subsection{NP Counting Proof Full Details}
\label{app:counting_full}

\textbf{Tonelli interchange.} The interchange $\sum_n P^+(L(X) > c_{\delta_n}) = \mathbb{E}_{P^+}[\sum_n \mathbf{1}_{L(X) > c_{\delta_n}}]$ is justified by Tonelli's theorem applied to non-negative measurable functions with the counting measure on $\mathbb{N}$ and $P^+$ on $\R^k$.

\textbf{Counting function.} $N(\ell) = |\{n \in \mathbb{N} : c_{\delta_n} < \ell\}|$ counts how many thresholds are exceeded. For $\delta_n = c/n^p$, we get $N(\ell) \leq (c/P^-(L > \ell))^{1/p}$.

\textbf{P-value density integrability.} Writing $U(x) = P^-(L > L(x))$, the bound becomes $\mathbb{E}_{P^+}[U(X)^{-1/p}]$. For $p > 1$, the integrand $u^{-1/p} f_U(u)$ is integrable near $u = 0$ because the Gaussian tail makes $f_U(u)$ decay super-polynomially. More generally, the expectation is finite whenever the p-value density satisfies $f_U(u) = O(u^\eta)$ near $u = 0$ for some $\eta > 1/p - 1$; this holds for all distribution pairs with $D_\alpha(P^+ \| P^-) < \infty$ for sufficiently large $\alpha$, which is guaranteed by the hypothesis of Theorem~\ref{thm:counting}. \qed

\subsection{Tight Finite-Horizon Ceiling Details}
\label{app:ceiling_details}

\textbf{Concavity of NP curve.} The derivative $\TPR_{\NP}'(\delta) = \phi(\Phi^{-1}(\delta) + \Delta_s) / \phi(\Phi^{-1}(\delta))$. The second derivative is negative for all $\delta \in (0,1)$ by log-concavity of $\phi$, establishing concavity (see also~\cite{lehmann2005}, Chapter~3).

\textbf{Asymptotic formula.} Using $\Phi^{-1}(\delta) \sim -\sqrt{2\ln(1/\delta)}$ for $\delta \to 0$ and Mills' ratio:
\[
U^*(N,B) \sim \frac{B \cdot \exp(\Delta_s\sqrt{2\ln(N/B)} - \Delta_s^2/2)}{\sqrt{2\pi \cdot 2\ln(N/B)}}
\]
This grows as $\exp(\Delta_s\sqrt{2\ln(N/B)})$, which is $o(N^\epsilon)$ for every $\epsilon > 0$ but $\omega(\log^k N)$ for every~$k$. \qed

\section{Relation to Known Results and Extended Related Work}
\label{app:related_extended}

\subsection{Relation to Known Results}
\label{app:relation_known}

The mathematical tools in this paper --- H\"{o}lder's inequality, R\'{e}nyi divergence, Lipschitz continuity, Fano's inequality, VC dimension --- are well-established. The per-step bound $\TPR \leq C_\alpha \cdot \delta^\beta$ is an instance of a standard f-divergence inequality~\cite{erven2014}, and the Neyman--Pearson lemma~\cite{neyman1933} establishes ROC tradeoffs for individual hypothesis tests.

Our contribution is the \emph{problem formalization} and the \emph{structural results} that emerge: (1) the dual conditions as a formalization of safe self-improvement, (2) sequential composition creating an impossibility for the \emph{coupling} of bounded risk and unbounded utility, (3) the tightness of this coupling, (4) its information-theoretic strengthening, (5) the sample complexity barrier, and (6) the structural separation between classification and verification.

An analogy clarifies the distinction. Arrow's impossibility theorem uses elementary social-choice axioms, each individually obvious, but their \emph{composition} yields a deep impossibility no voting system can escape. Similarly, our per-step bound is standard, and the dual conditions are individually natural. But the \emph{coupling} creates a structural impossibility with no analog in single-test hypothesis testing.

\subsection{Extended Related Work}
\label{app:extended_rw}

\textbf{Self-improving AI safety.} The alignment literature discusses recursive self-improvement~\cite{bostrom2014,soares2017} and concrete safety challenges~\cite{amodei2016} but lacks formal impossibility results for the safety--utility coupling. \citet{christiano2017} propose iterated amplification; \citet{leike2018} formalize reward modeling.

\textbf{Hypothesis testing and statistical tradeoffs.} The Neyman--Pearson lemma~\cite{neyman1933,lehmann2005} establishes optimal ROC tradeoffs for individual tests. The novelty is \emph{sequential composition}: summability constraints on $\{\delta_n\}$ force summability on $\{\TPR_n\}$.

\textbf{Impossibility results in learning theory.} No-free-lunch theorems~\cite{wolpert1996} show no classifier dominates across all distributions. Rice~\cite{rice1953} shows undecidability of semantic properties. Our impossibility is for a \emph{specific task} under distribution overlap.

\textbf{Information-theoretic bounds.} Fano's inequality and its refinements~\cite{raginsky2016} provide fundamental limits. The strong data processing inequality~\cite{polyanskiy2017,ahlswede1976} bounds information processing gains.

\textbf{PAC-Bayes and sample complexity.} \citet{mcallester1999} bound generalization via KL divergence. \citet{vapnik1971} established VC dimension. We use VC sample complexity to show independent barriers.

\textbf{Adversarial robustness.} \citet{tsipras2019} prove accuracy--robustness tradeoffs. \citet{gilmer2018} show adversarial examples are inevitable in high dimensions. Our impossibility concerns \emph{sequential composition}, not per-input robustness.

\textbf{Transformer Lipschitz bounds.} \citet{virmaux2018} compute spectral norms. \citet{kim2021} analyze attention Lipschitz properties. \citet{dasoulas2021} study Lipschitz normalization. \citet{fazlyab2019} use SDP for tight bounds. We derive bounds for LoRA perturbations specifically.

\section{Supporting Theoretical Results}
\label{app:supporting}

\subsection{Gaussian Specialization}
\label{app:gaussian}

For unit-variance Gaussians with separation $\Delta_s = |\mu^+ - \mu^-|/\sigma$:
$D_\alpha(P^+ \| P^-) = \alpha \, \Delta_s^2 / 2$. The optimal (Neyman--Pearson) classifier achieves $\TPR = \Phi(\Phi^{-1}(\delta) + \Delta_s)$.

\subsection{Non-Stationary Extension}
\label{app:nonstationary}

\renewcommand{\theproposition}{C.2}%
\setcounter{proposition}{99}
\begin{proposition}[Non-Stationary Impossibility]
\label{prop:nonstationary}
Let $\{(P_n^+, P_n^-)\}_{n \geq 1}$ be a sequence of distribution pairs and $\alpha \in (1, \infty)$ with $\beta = 1 - 1/\alpha$. Suppose $\bar{D} := \sup_n D_\alpha(P_n^+ \| P_n^-) < \infty$. Then for any sequence of classifiers $\{g_n\}$ with per-step rates $(\delta_n, \TPR_n)$:
\[
\TPR_n \leq \bar{C}_\alpha \cdot \delta_n^\beta \quad \text{for all } n
\]
where $\bar{C}_\alpha = \exp((\alpha - 1)\bar{D})$. Consequently, if $\sum \delta_n < \infty$ then $\sum \TPR_n \leq \bar{C}_\alpha \sum \delta_n^\beta < \infty$, and the dual conditions cannot be jointly satisfied.
\end{proposition}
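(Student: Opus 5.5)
The plan is to rerun Steps 1--3 of the proof of Theorem~\ref{thm:holder} separately at each step $n$, using the pair $(P_n^+,P_n^-)$ in place of the stationary pair. Then I will replace the step-dependent constant by a uniform one using $\bar D$. The summation statement comes last, and it needs an extra decay hypothesis.

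\textbf{Step 1 (absolute continuity and the per-step bound).} Since $D_\alpha(P_n^+\|P_n^-)\le\bar D<\infty$ with $\alpha>1$, each $P_n^+\ll P_n^-$. Otherwise $P_n^+$ has a singular part and the R\'enyi divergence is infinite under the convention of Appendix~\ref{app:renyi_convention}. So the likelihood ratio $L_n=dP_n^+/dP_n^-$ exists. Let $A_n$ be the acceptance region of $g_n$. As in Appendix~\ref{app:holder_verification}, Hölder's inequality with $f=L_n$ and $g=\mathbf{1}_{A_n}$ against $P_n^-$ gives
\[
\TPR_n\le \exp\!\big(\beta D_\alpha(P_n^+\|P_n^-)\big)\,\delta_n^\beta .
\]
Note that $\delta_n=P_n^-(A_n)$ is measured against the step-$n$ unsafe distribution.

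\textbf{Step 2 (uniform constant).} I need $\exp(\beta D_\alpha^{(n)})\le\exp((\alpha-1)\bar D)$. This holds by two monotonicity facts. First, $D_\alpha^{(n)}\le\bar D$ by definition of the supremum. Second, $\bar D\ge 0$ (R\'enyi divergence is nonnegative) and $\beta=(\alpha-1)/\alpha\le\alpha-1$ for $\alpha\ge1$, so $\beta\bar D\le(\alpha-1)\bar D$. Hence $\TPR_n\le\bar C_\alpha\delta_n^\beta$ for all $n$. The constant $\exp(\beta\bar D)$ would already suffice, so the stated $\bar C_\alpha$ is a valid, slightly looser choice.

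\textbf{Step 3 (summation; main obstacle).} Summing gives $\sum\TPR_n\le\bar C_\alpha\sum\delta_n^\beta$. The difficulty is that $\sum\delta_n<\infty$ does not by itself imply $\sum\delta_n^\beta<\infty$ when $\beta<1$; the example $\delta_n=1/(n\ln^2 n)$ from \S\ref{sec:thm1} is a counterexample. So the final ``$<\infty$'' must be read under a decay hypothesis, as in the Stationarity paragraph of \S\ref{sec:classification_gates}.

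For power-law schedules $\delta_n\le c/n^p$, I will show $\sum\delta_n^\beta\le c^\beta\sum n^{-p\beta}$, which is finite iff $p\beta>1$, i.e.\ $p>\alpha/(\alpha-1)$. There is a difference from Theorem~\ref{thm:holder}. There one may shrink or enlarge $\alpha$ inside $(p/(p-1),\alpha_0)$. Here $\bar D$ is controlled only at the given $\alpha$. Moving to a smaller $\alpha'$ is still safe, because $D_{\alpha'}\le D_\alpha$ by monotonicity of R\'enyi divergence in the order, so $\sup_n D_{\alpha'}^{(n)}\le\bar D$. Moving to a larger $\alpha'$ is not. Consequently the condition on $p$ is tied to the fixed $\alpha$.

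I will state the consequence precisely in this form: the dual conditions fail whenever $\sum_n\delta_n^\beta<\infty$, in particular for $\delta_n=O(n^{-p})$ with $p\beta>1$. I will also check how this compares with the paper's phrasing ``$p>\alpha$''. That phrasing is sufficient only when $\alpha\ge2$, since then $\alpha\ge\alpha/(\alpha-1)$. I would flag this explicitly rather than claim it in general.
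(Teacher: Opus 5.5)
Your Steps 1--2 are the paper's argument: H\"{o}lder at step $n$ against $(P_n^+,P_n^-)$, then $D_\alpha(P_n^+\|P_n^-)\le\bar D$. Your remark that $\exp((\alpha-1)\bar D)$ dominates the H\"{o}lder constant $\exp(\beta\bar D)$, because $\bar D\ge 0$ and $\beta\le\alpha-1$, supplies a justification the paper omits; its proof simply asserts that H\"{o}lder yields $\exp((\alpha-1)D_\alpha^{(n)})$.

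On Step 3 you depart from the paper, and you are right to. The final clause of the statement is false as written. Restricting it to $\sum_n\delta_n^\beta<\infty$, in particular $\delta_n=O(n^{-p})$ with $p>\alpha/(\alpha-1)$, is the correct repair, not a gap in your proof. The paper tries to get convergence for general summable $\{\delta_n\}$ from $\sum_{n\le N}\delta_n^\beta\le N^{1-\beta}B^\beta$, but that bound tends to infinity with $N$ and proves nothing. It also rewrites $p\beta>1$ as ``$p>\alpha$'', which is the wrong threshold, exactly as you flag.

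You can make the objection airtight, and show your threshold is sharp in the non-stationary setting, with two-point pairs. Let $P_n^-$ put mass $\delta_n$ on a point $a$ and $1-\delta_n$ on a point $b$, and let $P_n^+$ put mass $\delta_n^\beta$ on $a$ and $1-\delta_n^\beta$ on $b$. For $\delta_n\le 1/2$,
\[
\int\Bigl(\frac{dP_n^+}{dP_n^-}\Bigr)^{\alpha}dP_n^- \;=\; 1+(1-\delta_n^\beta)^{\alpha}(1-\delta_n)^{1-\alpha}\;\le\; 1+2^{\alpha-1},
\]
so $\bar D<\infty$. The gate accepting $\{a\}$ has false-acceptance rate $\delta_n$ and $\TPR_n=\delta_n^\beta$.

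Two choices of schedule then give $\sum\delta_n<\infty$ and $\sum\TPR_n=\infty$:
\begin{itemize}
\item $\delta_n=\tfrac12 n^{-p}$ with $1<p\le\alpha/(\alpha-1)$; for example $\alpha=3/2$, $p=2>\alpha$, which refutes ``$p>\alpha$'' as a sufficient condition;
\item $\delta_n=1/(n\ln^2 n)$ for $n\ge 3$ and any $\alpha$.
\end{itemize}
Even a single stationary pair breaks the general clause, by the paper's own Gaussian caveat in \S\ref{sec:thm1}.
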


\begin{proof}
At each step $n$, the H\"{o}lder bound (Theorem~\ref{thm:holder} proof, Step~1) gives $\TPR_n \leq C_\alpha^{(n)} \cdot \delta_n^\beta$ where $C_\alpha^{(n)} = \exp((\alpha - 1) D_\alpha(P_n^+ \| P_n^-))$. Since $D_\alpha(P_n^+ \| P_n^-) \leq \bar{D}$ for all $n$, we have $C_\alpha^{(n)} \leq \bar{C}_\alpha$. Summing: $\sum \TPR_n \leq \bar{C}_\alpha \sum \delta_n^\beta$. For power-law schedules $\delta_n \leq c/n^p$, we have $\delta_n^\beta \leq c^\beta n^{-p\beta}$, which is summable iff $p\beta > 1$ (i.e., $p > \alpha$). For general summable $\{\delta_n\}$, convergence of $\sum \delta_n^\beta$ follows from H\"{o}lder's inequality on finite horizons: $\sum_{n=1}^N \delta_n^\beta \leq N^{1-\beta} \cdot (\sum_{n=1}^N \delta_n)^\beta \leq N^{1-\beta} \cdot B^\beta$ where $B = \sum \delta_n < \infty$.
\end{proof}

\begin{remark}[Coverage gap]
For power-law schedules $\delta_n = c \cdot n^{-p}$ with $p > 1$, the series $\sum \delta_n^\beta = c^\beta \sum n^{-p\beta}$ converges iff $p\beta > 1$, i.e., $p > \alpha$. For $1 < p \leq \alpha$, the stationary impossibility (Theorem~\ref{thm:holder}) applies but Proposition~\ref{prop:nonstationary} does not --- the non-stationary extension requires the strictly stronger condition $p > \alpha$ (flagged in \S\ref{sec:classification_gates}). This gap narrows as $\alpha \to 1^+$ and vanishes for all practically relevant fast-decaying schedules ($p \geq 2$). For the intermediate regime, Theorem~\ref{thm:ceiling}'s stationarity-free finite-horizon ceiling provides an alternative bound.
\end{remark}

\subsection{Information-Theoretic Finite-Horizon Bound}
\label{app:info_bound}

\renewcommand{\theproposition}{\arabic{proposition}}%
\setcounter{proposition}{0}
\begin{proposition}[Information-Theoretic Finite-Horizon Bound]
\label{prop:info}
Let $\{g_n\}$ be a sequence of safety gates with per-step mutual information $I_n$ and total budget $I_0 = \sum I_n$. Then for any $N$:
\[
\sum_{n=1}^N \TPR_n \leq \sum_{n=1}^N \delta_n + \sqrt{2 N I_0}
\]
\end{proposition}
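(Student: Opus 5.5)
The plan is a per-step comparison of $\TPR_n$ with $\delta_n$, followed by a Cauchy--Schwarz aggregation over the horizon. At step $n$ the gate decision $g_n(\theta_n)\in\{\text{accept},\text{reject}\}$ is a Bernoulli variable whose law depends on the safety label $S_n$. Conditional on $S_n=\text{safe}$ it is $\mathrm{Bern}(\TPR_n)$; conditional on $S_n=\text{unsafe}$ it is $\mathrm{Bern}(\delta_n)$. The whole argument reduces to controlling the gap $\TPR_n-\delta_n$ between these two Bernoulli laws by the information $I_n$ the gate carries about $S_n$.

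\textbf{Step 1 (per-step Pinsker).} The total variation between $\mathrm{Bern}(\TPR_n)$ and $\mathrm{Bern}(\delta_n)$ is exactly $|\TPR_n-\delta_n|$. Pinsker's inequality bounds this by a square root of a KL divergence between the two conditional laws of the decision. I would relate that divergence to $I_n=I(g_n(\theta_n);S_n)$ via the decomposition
\[
I_n=\pi\, D(P_{g\mid +}\,\|\,P_g)+(1-\pi)\, D(P_{g\mid -}\,\|\,P_g),
\]
where $\pi$ is the prior probability of a safe mutation. Using the triangle inequality through the marginal $P_g$, I would aim for
\[
\TPR_n-\delta_n\le |\TPR_n-\delta_n|\le \sqrt{2I_n}.
\]
This is the form used in Appendix~\ref{app:info_proof}, and it only needs the one-sided inequality $\TPR_n\le\delta_n+\sqrt{2I_n}$.

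\textbf{Step 2 (summation).} Summing the per-step inequality over $n=1,\dots,N$ gives
\[
\sum_{n=1}^N\TPR_n\le\sum_{n=1}^N\delta_n+\sqrt2\sum_{n=1}^N\sqrt{I_n}.
\]
Cauchy--Schwarz gives $\sum_{n=1}^N\sqrt{I_n}\le\sqrt{N\sum_{n=1}^N I_n}$. Since the $I_n\ge0$ and $I_0=\sum I_n$, the partial sum is at most $I_0$, so the last term is at most $\sqrt{2NI_0}$, which is the claim. No stationarity or overlap assumption is used: the bound is distribution-free, in line with how the paper positions Proposition~\ref{prop:info} against Theorem~\ref{thm:ceiling}.

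\textbf{Main obstacle: the constant in Step~1.} The two-sided triangle-plus-Pinsker route naturally yields
\[
|\TPR_n-\delta_n|\le\sqrt{\tfrac{I_n}{2\pi}}+\sqrt{\tfrac{I_n}{2(1-\pi)}},
\]
which depends on the prior $\pi$ and equals $\sqrt{2I_n}$ only at $\pi=1/2$ and up to a factor $\sqrt2$. To obtain the clean constant, I would first treat the balanced case $\pi=1/2$ and check whether a direct one-sided Bernoulli bound suffices. If not, I would make the convention explicit: interpret $I_n$ as the channel information against a uniform input, or as $\tfrac12 D(\mathrm{Bern}(\TPR_n)\,\|\,\mathrm{Bern}(\delta_n))$. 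Under the second reading, Pinsker applied directly between the two conditionals gives exactly $|\TPR_n-\delta_n|\le\sqrt{2I_n}$. Under a general prior, the same proof goes through with $\sqrt2$ replaced by a $\pi$-dependent constant. Step~2 is unaffected by which convention is chosen.
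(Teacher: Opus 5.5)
\textbf{Step~1 is left open.} Your skeleton is the paper's: a per-step Pinsker bound $\TPR_n\le\delta_n+\sqrt{2I_n}$, then Cauchy--Schwarz. Your Step~2 is identical to the paper's summation. The gap is Step~1. By your own accounting, you bound $D_+=D(P_{g|+}\|P_g)\le I_n/\pi$ and $D_-=D(P_{g|-}\|P_g)\le I_n/(1-\pi)$ separately and add the two Pinsker terms. That gives $2\sqrt{I_n}$ even at $\pi=1/2$, so as written you only prove $\sum\TPR_n\le\sum\delta_n+2\sqrt{NI_0}$. ``Check whether a direct one-sided Bernoulli bound suffices'' is exactly the missing argument.

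\textbf{Repairing Step~1.} The triangle inequality is not the problem; it holds with equality here. The loss comes from bounding $D_+$ and $D_-$ separately, each time discarding the other term of the decomposition. Since $P_g=\pi P_{g|+}+(1-\pi)P_{g|-}$ lies on the segment between the two conditionals, $\TV(P_{g|+},P_g)=(1-\pi)|\TPR_n-\delta_n|$ and $\TV(P_{g|-},P_g)=\pi|\TPR_n-\delta_n|$ exactly. Applying Pinsker to both terms of your decomposition then gives
\[
I_n=\pi D_+ +(1-\pi)D_- \ge 2\pi(1-\pi)^2|\TPR_n-\delta_n|^2+2(1-\pi)\pi^2|\TPR_n-\delta_n|^2=2\pi(1-\pi)\,|\TPR_n-\delta_n|^2,
\]
which at $\pi=1/2$ is exactly $|\TPR_n-\delta_n|\le\sqrt{2I_n}$. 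Equivalently, apply Pinsker to the joint law of $(g_n(\theta_n),S_n)$ against the product of its marginals; this is what the paper's one-line proof actually does. There the divergence is exactly $I_n$ and the total variation is $2\pi(1-\pi)|\TPR_n-\delta_n|$, which equals $|\TPR_n-\delta_n|/2$ at $\pi=1/2$. That reproduces the paper's display $\tfrac12|\TPR_n-\delta_n|\le\sqrt{I_n/2}$.

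\textbf{The prior assumption is necessary.} Your suspicion about general priors is correct, and it points at a real defect that the paper's proof glosses over. Read literally, the paper's claim $\TV(P_{g|+},P_{g|-})=|\TPR_n-\delta_n|/2$ is off by a factor of two; that total variation is $|\TPR_n-\delta_n|$. Pinsker for that pair would also involve $D(P_{g|+}\|P_{g|-})$, not $I_n$. The line is valid only under the joint-versus-product reading with a uniform prior. Without that assumption the proposition is false. Take $P(S_n=\text{safe})=0.01$, let $g_1$ be a perfect gate ($\TPR_1=1$, $\delta_1=0$), and let $g_n$ reject everything for $n\ge2$. Then $I_0=I_1=h(0.01)\approx0.056$ nats, where $h$ is the binary entropy. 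At $N=1$ the right-hand side is $\sqrt{2I_0}\approx0.33<1$, and it stays below $1$ if $I_n$ is measured in bits.

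So the balanced-prior (uniform-input channel) convention has to be added as a hypothesis. With it, and with Step~1 repaired as above, your proof is complete. For a fixed general prior, the same argument gives $\sqrt{NI_0/(2\pi(1-\pi))}$ in place of $\sqrt{2NI_0}$. Your other convention, $I_n=\tfrac12 D(\mathrm{Bern}(\TPR_n)\|\mathrm{Bern}(\delta_n))$, changes the quantity the proposition is about, since it is no longer a mutual information. Pinsker then yields $|\TPR_n-\delta_n|\le\sqrt{I_n}$, not exactly $\sqrt{2I_n}$.
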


This bound grows as $\sqrt{N}$, so it does not prove $\sum \TPR_n < \infty$ --- that follows from Theorem~\ref{thm:holder}. Proposition~\ref{prop:info} complements Theorem~\ref{thm:holder} by constraining the \emph{rate} of utility accumulation via mutual information. Full proof in Appendix~\ref{app:info_proof}.

\subsection{Sample Complexity Barrier}
\label{app:sample_barrier}

\begin{proposition}[Sample Complexity Barrier]
\label{prop:sample}
Let $\calG$ be a family of binary classifiers with VC dimension $d_{\text{VC}}$. For the gate to achieve $\delta_n \leq c/n^p$ with $p > 1$, the required training set is $n_{\text{train}}(n) = \Omega(d_{\text{VC}} \cdot n^{2p})$. If the system generates at most $k$ new labeled examples per step, sample starvation occurs at $n_{\text{fail}} = O(k^{1/(2p-1)})$.
\end{proposition}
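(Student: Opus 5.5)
The plan follows the template already set out in Appendix~\ref{app:sample_proof}: turn a uniform generalization guarantee into a per-step data requirement, then compare that requirement with the linear growth of available labels.

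\textbf{Step 1: reduce false acceptance to misclassification error.} The false acceptance rate is a class-conditional error, $\delta_n = P^-(g_n\text{ accepts})$. If the unsafe class has prior mass $\pi_- > 0$, then
\[
\delta_n \le \mathrm{err}(g_n)/\pi_- .
\]
So to certify $\delta_n \le c/n^p$ from training data, it suffices to certify an overall error $\epsilon_n = \pi_- c/n^p$. To show the $\Omega(\cdot)$ direction, I would work with the class-conditional empirical process on the unsafe samples. Its VC uniform-convergence deviation is of order $\sqrt{d_{\text{VC}}/m}$ up to logarithmic factors. For this reason the requirement scales as $\epsilon_n^{-2}$ and not $\epsilon_n^{-1}$.

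\textbf{Step 2: invert the deviation bound.} Take the deviation term
\[
\sqrt{\bigl(d_{\text{VC}}\ln(2m/d_{\text{VC}})+\ln(2/\eta)\bigr)/m}
\]
and require it to be at most $\epsilon_n/2$. Solving for $m$ gives
\[
n_{\text{train}}(n) = \Omega\bigl(d_{\text{VC}}/\epsilon_n^2\bigr) = \Omega\bigl(d_{\text{VC}}\, n^{2p}/c^2\bigr).
\]
The logarithmic factors only help this direction, so they can be dropped.

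\textbf{Step 3: locate the starvation point.} If at most $k$ new labels arrive per step, then available data is at most $n_0 + kn$. The requirement $C\, d_{\text{VC}}\, n^{2p}/c^2$ grows superlinearly because $2p > 2$. Hence there is a first $n$ at which
\[
C\, d_{\text{VC}}\, n^{2p}/c^2 > n_0 + kn .
\]
For $n$ large the $n_0$ term is negligible, and solving gives $n^{2p-1} \asymp c^2 k/d_{\text{VC}}$. Therefore
\[
n_{\text{fail}} = \Theta\bigl((c^2k/d_{\text{VC}})^{1/(2p-1)}\bigr) = O\bigl(k^{1/(2p-1)}\bigr)
\]
with $c$ and $d_{\text{VC}}$ held fixed.

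\textbf{Main obstacle.} The hard part is justifying the lower bound $\Omega(d_{\text{VC}}/\epsilon^2)$. The displayed VC inequality is an upper bound on the data that suffices. Necessity needs the agnostic minimax lower bound for VC classes (Anthony--Bartlett / Ehrenfeucht et al.), which states that $\Omega((d_{\text{VC}}+\ln(1/\eta))/\epsilon^2)$ samples are required to certify error $\epsilon$ in the non-realizable setting. The overlap $P^+\ll P^-$ from Theorem~\ref{thm:holder} ensures the Bayes error is positive, so the realizable $1/\epsilon$ rate is unavailable. I would invoke that minimax result instead of reproving it.
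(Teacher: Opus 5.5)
Your Steps~2--3 reproduce the paper's proof in Appendix~\ref{app:sample_proof} essentially verbatim. That proof quotes the VC deviation bound, sets it to $\epsilon_n/2$, substitutes $\epsilon_n=c/n^p$, and intersects $d_{\text{VC}}n^{2p}/c^2$ with $n_0+kn$. The paper stops there and reads the inverted upper bound as an $\Omega(\cdot)$, so it makes exactly the sufficiency-versus-necessity slip you identify and offers no repair. Your repair, however, does not close the gap, for three reasons.

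First, the requirement ``achieve $\delta_n\le c/n^p$'' is met by the reject-everything gate with no data at all. So no lower bound can hold unless you add a nontriviality requirement on $\TPR_n$. Your Step~1 inequality $\delta_n\le\mathrm{err}(g_n)/\pi_-$ runs only in the sufficiency direction and cannot carry a lower bound back to $\delta_n$.

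Second, as you note, $P^+\ll P^-$ forces a positive Bayes error $R^*$. Then the target $\mathrm{err}(g_n)\le\pi_-c/n^p$ is unattainable with any amount of data once $\pi_-c/n^p<R^*$. The agnostic minimax bound controls the excess risk $\mathrm{err}(g)-\inf_{\calG}\mathrm{err}$, not the absolute error your reduction produces. It is also a worst case over distributions (hard instances with conditional label probabilities $\tfrac12\pm\epsilon$ on shattered points), so it says nothing about the fixed pair $(P^+,P^-)$. A smaller point on citations: Ehrenfeucht et al.\ give the realizable $1/\epsilon$ bound; the agnostic $1/\epsilon^2$ bound is in Anthony--Bartlett.

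Third, the $\delta_n^{-2}$ scaling is itself an artifact of the additive deviation $\sqrt{d_{\text{VC}}/m}$, which is calibrated to worst-case variance. Your Step~1 reason for getting ``$\epsilon_n^{-2}$ and not $\epsilon_n^{-1}$'' repeats the upper-bound-as-necessity move you criticise. An acceptance region of $P^-$-mass about $\alpha$ has variance about $\alpha$. Relative (multiplicative) VC deviation bounds then show the following: with $m=\tilde O\bigl((d_{\text{VC}}+\ln(1/\eta))/\alpha\bigr)$ unsafe samples, with probability $1-\eta$, every $A\in\calG$ whose empirical false-acceptance rate is at most $\alpha/2$ has $P^-(A)\le\alpha$. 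At the same time, every $A\in\calG$ with $P^-(A)\le\alpha/4$ passes that empirical test. Hence the constraint $\delta_n\le c/n^p$ can be certified for a nontrivial gate from $\tilde O(d_{\text{VC}}n^{p})$ unsafe labels, so $\Omega(d_{\text{VC}}n^{2p})$ is not a provable necessity for the stated requirement. As far as false acceptance is concerned, a learner receiving $k$ labels per step is not starved until $n$ of order $k^{1/(p-1)}$ (up to logarithms and constants), well past $k^{1/(2p-1)}$.

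What your argument and the paper's actually establish is that the standard additive VC certificate consumes order $d_{\text{VC}}n^{2p}$ labels. To obtain a genuine barrier you have two options. Either restate the proposition as a claim about that certificate, or fix an explicit learning requirement (for instance, $\TPR_n$ within $\epsilon$ of the best gate in $\calG$ at level $\delta_n$) and prove a matching lower bound for it.
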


This result is independent of Theorem~\ref{thm:holder}: even if a classifier circumvented the H\"{o}lder bound, it would face sample starvation. Full proof in Appendix~\ref{app:sample_proof}.

\subsection{Formal Transformer Lipschitz Bounds}
\label{app:transformer_lip}

\begin{proposition}[Transformer LoRA Lipschitz Bound]
\label{prop:transformer}
For a pre-LayerNorm transformer with $K$ layers under LoRA perturbation with rank $r$ on $n_{\text{proj}}$ attention projections per layer, the per-layer Lipschitz constant w.r.t.\ LoRA parameters is:
\[
L_k^{\text{LoRA}} \leq \frac{\|\gamma_k\|}{\sqrt{\epsilon}} \cdot \frac{\max_p \|W_{v,p}^0\|}{\sqrt{d_k}} \cdot \sqrt{2 \cdot n_{\text{proj}}}
\]
Compositional verification checks $\sum_k L_k^{\text{LoRA}} \cdot \|\Delta\theta_k\| \leq m$ (additive, $O(d)$) rather than the exponentially large product $L_{\text{full}} = \prod_k (1 + L_k)$.
\end{proposition}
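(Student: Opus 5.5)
I would prove Proposition~\ref{prop:transformer} by a layerwise perturbation argument, in the same style as the derivation sketched in Appendix~\ref{app:transformer_derivation}, treating the frozen pretrained weights as constants and the LoRA factors $(A_{q,p},B_{q,p})$ as the only variables. First I fix a layer $k$ with input $x_k$. The residual identity term $x_k$ does not depend on the LoRA parameters of layer $k$, so the parameter-Jacobian of $y_k$ is the parameter-Jacobian of $\mathrm{MHA}(\mathrm{LN}_1(x_k))$ alone. I would bound the LayerNorm factor by $\|\gamma_k\|/\sqrt{\epsilon}$. This uses the standard fact that the normalized vector $(x-\bar x)/\sqrt{\mathrm{Var}(x)+\epsilon}$ has norm at most $\sqrt{d}$, and that its scale is controlled by $\epsilon$ in the worst case. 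It then remains to bound how much the attention output moves when the adapted projection $W_p^0 + B_pA_p$ is perturbed.

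Second, the bilinear LoRA term. The change $\Delta(B_pA_p) = \Delta B_p\,A_p + B_p\,\Delta A_p + \Delta B_p\,\Delta A_p$ has first-order part bounded by $\max(\|A_p\|,\|B_p\|)(\|\Delta A_p\| + \|\Delta B_p\|)$. By Cauchy--Schwarz this is at most $\sqrt{2}\max(\|A_p\|,\|B_p\|)\|\Delta\theta_p\|$, which is the source of the $\sqrt{2}$. Propagating a perturbation of a query/key projection through the softmax uses two facts: the softmax Jacobian has operator norm at most $1$, and the logits scale by $1/\sqrt{d_k}$. Multiplying by the value projection then contributes $\|W_{v,p}^0\|$. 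Summing the per-projection bounds over the $n_{\text{proj}}$ adapted projections by Cauchy--Schwarz, $\sum_p \|\Delta\theta_p\| \le \sqrt{n_{\text{proj}}}\,\|\Delta\theta_k\|$, yields the factor $\sqrt{2n_{\text{proj}}}$. The FFN sublayer in $z_k$ has no LoRA parameters, so it only transmits the perturbation of $y_k$ with its frozen Lipschitz constant, which is absorbed into the downstream constants.

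Third, the compositional claim. I would telescope the output difference across layers, changing one layer's LoRA parameters at a time. Term $k$ is at most $L_k^{\text{LoRA}}\|\Delta\theta_k\|$ times the Lipschitz constant, with respect to inputs, of the frozen map formed by layers $j>k$. This gives the additive check $\sum_k \tilde L_k\|\Delta\theta_k\| \le m$. It avoids paying $\prod_k(1+L_k)$ on every term, since each term pays only the downstream product, and that product is a precomputable constant. Checking this sum is an $O(d)$ computation.

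The main obstacle is step two. The displayed closed form drops the factors $\|W_{k,p}^0\|$, $\|\mathrm{LN}(x)\|^2$ and $\max(\|A\|,\|B\|)$ that appear in the intermediate bound of the appendix. To recover exactly the stated constant I would have to normalize them: assume $\|A\|,\|B\|\le 1$ on the region of interest, take the key-side factor to be absorbed by the LayerNorm output bound, and assume a bounded input domain. Alternatively, I would weaken the statement to those explicit constants. The proof is therefore only rigorous locally, on a bounded ball of LoRA parameters, because the second-order term $\Delta B\,\Delta A$ and the input-norm dependence make a global Lipschitz constant false. I would state this restriction explicitly, noting that it is consistent with the ball verifier's use of the bound only inside $B(\theta_0,r)$.
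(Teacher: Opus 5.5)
Your decomposition follows the paper's derivation in Appendix~\ref{app:transformer_derivation} essentially step for step. The residual term drops out, and the bilinear increment yields $\sqrt{2}\max(\|A_{q,p}\|,\|B_{q,p}\|)\,\|\Delta\theta_p\|$. The logits contribute $1/\sqrt{d_k}$, the values contribute $\|W^0_{v,p}\|$, and Cauchy--Schwarz over projections gives $\sqrt{n_{\text{proj}}}$. The additive check then uses precomputed downstream constants $\tilde L_k$; your one-layer-at-a-time telescoping justifies that last display, which the paper only asserts. The obstacle you flag is genuine, and the paper does not get past it either. Its intermediate inequality still carries $\|W^0_{k,p}\|\,\|\mathrm{LN}(x)\|^2\max(\|A_{q,p}\|,\|B_{q,p}\|)$, and the closed form simply omits these factors. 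Some normalization is unavoidable. The LoRA symmetry $(A_{q,p},B_{q,p})\mapsto(tA_{q,p},t^{-1}B_{q,p})$ leaves the network unchanged, but the derivative in the $B$-direction becomes $\Delta B\,(tA_{q,p})$. At any non-degenerate point, the true local Lipschitz constant therefore grows at least linearly in $t$. The claimed bound involves neither $A$ nor $B$, so it stays fixed and eventually fails. Stating the result locally, with explicit or explicitly normalized constants, is therefore the correct form of the proposition, not a weakening of something the paper establishes.

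Three steps of your own sketch still need repair. \emph{First, the LayerNorm factor.} Because $\theta_k$ enters only after $\mathrm{LN}_1$, no LayerNorm Jacobian appears in $\partial y_k/\partial\theta_k$. What appears is the output size $\|\mathrm{LN}_1(x)\|\le\|\gamma_k\|_\infty\sqrt{d_{\text{model}}}+\|b_k\|$, where $b_k$ is the LayerNorm bias; this bound does not involve $\epsilon$ at all. It is dominated by $\|\gamma_k\|/\sqrt{\epsilon}$ only after dropping $b_k$ and under a side condition such as $d_{\text{model}}\,\epsilon\le 1$. It is also paid three times: on the query side, on the key side, and again when the softmax change multiplies $V=W^0_{v,p}\mathrm{LN}_1(X)$. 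So the value step gives $\|W^0_{v,p}\|\max_j\|\mathrm{LN}_1(x_j)\|$, not $\|W^0_{v,p}\|$ alone. The quantity $\|\gamma\|_\infty/\sqrt{\epsilon}$ bounds the \emph{input} Jacobian of LayerNorm and belongs in the downstream frozen constants; the paper's closed form conflates the two in the same way.

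\emph{Second, value projections.} Your per-projection bound covers only query and key perturbations. If $n_{\text{proj}}$ counts value projections, as with the fused \texttt{c\_attn} adapters of \S\ref{sec:gpt2}, the perturbation bypasses the softmax: $\Delta O=\mathrm{softmax}(\cdot)\,\Delta W_{v,p}\,\mathrm{LN}_1(X)$. This term has neither the factor $1/\sqrt{d_k}$ nor $\|W^0_{v,p}\|$, so the Cauchy--Schwarz sum must use the larger of the query/key and value constants.

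\emph{Third, the telescoping.} Run it upstream-first, so that layers $j>k$ are still at their base parameters and $\prod_{j>k}L_j^{\text{full,frozen}}$ is a genuine constant. Term $k$ then evaluates $L_k^{\text{LoRA}}$ at an input $x_k$ produced by already-perturbed layers. Your bounded-input assumption must therefore cover every such input.
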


Full derivation in Appendix~\ref{app:transformer_derivation}.

\subsection{H\"{o}lder--Jensen Approximation}
\label{app:hj_ceiling}

For practical deployment over $N$ steps with risk budget $B = \sum \delta_n$, applying the per-step H\"{o}lder bound and Jensen's inequality yields:
\[
U_{\max}(N, B) = C_\alpha \cdot N^{1-\beta} \cdot B^\beta
\]
with optimal uniform allocation $\delta_n = B/N$. This bound is looser than the exact NP-based ceiling $U^*(N, B) = N \cdot \TPR_{\NP}(B/N)$ (Theorem~\ref{thm:ceiling}), which is 1.4--3.1$\times$ tighter at $N = 10^2$--$10^6$.

\subsection{Multi-Dimensional Tradeoff Surface}
\label{app:tradeoff_surface}

\textbf{Classifiers} occupy: $\TPR \leq C_\alpha \cdot \delta^\beta$, $\mathcal{C} = O(d^2)$, $n = \Omega(d_{\text{VC}}/\delta^2)$.

\textbf{Verifiers} occupy: $\delta = 0$, $\TPR > 0$ (domain-restricted), $\mathcal{C} = O(d)$, $n = 0$.

These regions are disconnected on the $\delta = 0$ hyperplane.

\section{Full Numerical Validation}
\label{app:validation}

\subsection{Tightness Validation}
\label{app:tightness_validation}

\begin{center}
\begin{tabular}{lllll}
\toprule
$\Delta_s$ & $\alpha^*$ & $\beta^*$ & $\TPR_{\NP}$/H\"{o}lder at $\delta = 10^{-6}$ & $\log\TPR/\log\delta$ at $\delta = 10^{-12}$ \\
\midrule
0.1  & 201.0 & 0.995 & 0.561 & 0.974 \\
0.5  & 9.0   & 0.889 & 0.834 & 0.875 \\
1.0  & 3.0   & 0.667 & 0.321 & 0.758 \\
2.0  & 1.5   & 0.333 & 0.108 & 0.552 \\
\bottomrule
\end{tabular}
\end{center}

For all separations, $\TPR_{\NP} \leq \text{H\"{o}lder bound}$ (verifying Theorem~\ref{thm:holder}) and the ratio ranges from 0.1 to 0.9 at deployment-relevant $\delta$.

\subsection{Information-Theoretic Bound Comparison}
\label{app:mi_comparison}

The H\"{o}lder bound is tighter per-step for small $\delta$; the MI bound is complementary for cumulative analysis. Both bounds are valid across all distributions tested.

\subsection{Sample Complexity Simulation}
\label{app:sample_sim}

Simulated logistic regression ($d_{\text{VC}} = 11$, $\Delta_s = 0.5$, $k = 5$): 200/200 steps sample-starved; $\sum\delta = 41.17$ (diverges). Confirms Proposition~\ref{prop:sample}.

\subsection{Transformer Lipschitz Computation}
\label{app:transformer_validation}

\begin{center}
\begin{tabular}{lrrrrrrr}
\toprule
Architecture & $d$ & $K$ & $d_k$ & $\|W_v\|$ & $L_k^{\text{LoRA}}$ & $r_k$ & Steps in ball \\
\midrule
Toy (2L) & 64 & 2 & 32 & 2.32 & 259.7 & 5.8e-4 & 11.6 \\
Small (6L) & 256 & 6 & 64 & 2.09 & 165.2 & 3.0e-4 & 6.1 \\
GPT-2 (12L) & 768 & 12 & 64 & 1.80 & 142.5 & 1.8e-4 & 3.5 \\
Qwen-7B (28L) & 3584 & 28 & 128 & 1.68 & 94.0 & 1.1e-4 & 2.3 \\
\bottomrule
\end{tabular}
\end{center}

The bound is non-vacuous across all architectures --- even at Qwen-7B scale, 2 LoRA gradient steps fit within the safe ball.

\subsection{Pareto Frontier Visualization}
\label{app:pareto}

The classifier and verifier regions are \emph{disconnected} on the $\delta = 0$ hyperplane. Classifiers require $\Omega(d_{\text{VC}}/\delta^2)$ samples and cannot reach $\delta = 0$ with $\TPR > 0$. The ball verifier operates at $\delta = 0$ with no training data.

\subsection{Trained Classifier Ceiling}
\label{app:trained_clf}

Across all 72 (classifier, $\delta$) pairs tested (4 classifiers $\times$ 6 $\delta$ values $\times$ 3 separations), \textbf{zero violations} of the H\"{o}lder bound were observed. Trained classifiers achieve TPR ratios of 0.52--0.94 relative to the H\"{o}lder ceiling.

\subsection{Non-Gaussian Tightness Validation}
\label{app:nongaussian}

Across 8 non-Gaussian families (Laplace, Student-$t$, Gaussian mixture): min ratios 0.28--0.40, average ratios 0.54--0.70. The bound is uniformly valid and tight across heavy-tailed and multi-modal distributions.

\subsection{Lipschitz Ball Verifier Demonstration}
\label{app:ball_demo}

LTC controller ($d = 240$), $L = 13.75$, $r = 0.0208$. Inside-ball: 200/200 safe ($\delta = 0$), $\TPR = 0.286$.

\subsection{NP Counting Proof Validation}
\label{app:counting_validation}

All 9 configs satisfy direct sum $\leq$ counting bound (ratios 0.33--0.89). Counting 13\% tighter than H\"{o}lder at $\Delta_s = 1.0, p = 2.0$.

\subsection{Tight Finite-Horizon Validation}
\label{app:ceiling_validation}

The exact ceiling grows subpolynomially: from $N = 10^4$ to $N = 10^6$ (100$\times$ increase in $N$), $U^*$ grows only 2.66$\times$. Uniform allocation optimal (Jensen). MI bound is loose by 4--86$\times$.

\section{Computational Complexity and Open Problems}
\label{app:open_problems}

\subsection{Connection to Computational Complexity}

The information-theoretic bound (Proposition~\ref{prop:info}) and sample complexity barrier (Proposition~\ref{prop:sample}) connect to a broader question: is safe self-improvement computationally hard? A natural extension is whether satisfying the dual conditions is NP-hard.

\subsection{Open Problems}

\begin{enumerate}
  \item \textbf{Computational impossibility.} Is satisfying the dual conditions NP-hard, beyond being statistically impossible?
  \item \textbf{Adaptive verification.} Can tighter verified regions (e.g., ellipsoidal) maintain $O(d)$ checking? Ball chaining experiments in~\cite{D2} provide an initial empirical answer.
  \item \textbf{Multi-agent} and \textbf{continuous-time} extensions of the dual conditions.
\end{enumerate}

\section{Validation Script Details}
\label{app:scripts}

\begin{itemize}[leftmargin=*]
  \item \texttt{experiments/prove\_tightness.py}: Computes NP TPR via $\Phi(\Phi^{-1}(\delta) + \Delta_s)$ for 100 $\delta$ values and 4 separations; confirms $\TPR_{\NP} \leq \text{H\"{o}lder}$ (Theorems~\ref{thm:holder},~\ref{thm:exponent}).
  \item \texttt{experiments/prove\_info\_theoretic\_bound.py}: Computes MI of the NP channel for Gaussian and Laplacian distributions (Proposition~\ref{prop:info}).
  \item \texttt{experiments/prove\_sample\_complexity.py}: Simulates 200 steps with logistic regression gate (Proposition~\ref{prop:sample}).
  \item \texttt{experiments/pareto\_tradeoff.py}: Computes the 4D tradeoff surface (Appendix~\ref{app:pareto}).
  \item \texttt{experiments/validate\_classifier\_ceiling.py}: Trains 4 classifiers on 50K samples (Appendix~\ref{app:trained_clf}).
  \item \texttt{experiments/compute\_lipschitz\_bounds.py}: Proposition~\ref{prop:transformer} bounds for 4 architectures (Appendix~\ref{app:transformer_validation}).
  \item \texttt{experiments/prove\_tightness\_nongaussian.py}: 8 non-Gaussian families (Appendix~\ref{app:nongaussian}).
  \item \texttt{experiments/validate\_ball\_verifier.py}: Ball verifier on LTC $d = 240$ (Appendix~\ref{app:ball_demo}).
  \item \texttt{experiments/lora\_ball\_verifier\_gpt2.py}: GPT-2 LoRA validation (\S\ref{sec:gpt2}).
  \item \texttt{experiments/prove\_counting\_impossibility.py}: Theorem~\ref{thm:counting} validation (Appendix~\ref{app:counting_validation}).
  \item \texttt{experiments/prove\_tight\_finite\_horizon.py}: Theorem~\ref{thm:ceiling} validation (Appendix~\ref{app:ceiling_validation}).
\end{itemize}

\bibliographystyle{plainnat}
\bibliography{references_D1}

\end{document}